
\documentclass[review,3p,11pt]{article}


%


\usepackage{lipsum}
\usepackage{graphicx}
\usepackage{algorithmic}

\usepackage{amsopn}
\usepackage{mathtools}

\makeatletter
\newcommand*{\addFileDependency}[1]{
  \typeout{(#1)}
  \@addtofilelist{#1}
  \IfFileExists{#1}{}{\typeout{No file #1.}}
}
\makeatother

\usepackage{amsthm}
\usepackage{amsmath,amsfonts,amssymb}
\newcommand{\prox}{\operatorname{prox}}
\newcommand{\grad}{\nabla}

\DeclareMathOperator*{\argmin}{argmin}

\newcommand{\dx}{\operatorname{d}\!x}
\newcommand{\dtau}{\operatorname{d}\!\tau}

\newcommand{\la}{\lambda}

\newcommand{\norm}[1]{\lVert#1\rVert}
\renewcommand{\div}{\operatorname{div}}
\usepackage{hyperref}

\newtheorem{remark}{Remark}
\newtheorem{example}{Example}
\newtheorem{theorem}{Theorem}

\usepackage[inkscapearea=page]{svg}
\usepackage{algorithm2e}
\newcommand{\U}{\mathbb A}
\newcommand{\Hr}{\mathbb H}

\newcommand{\pdF}{\mathcal{F}}

\newcommand{\uu}{\boldsymbol{u}}
\newcommand{\vv}{\boldsymbol{v}}

\newcommand{\Do}{\mathbf{D}}

\newtheorem{problem}[theorem]{Problem}

\usepackage{bbm}
\ifpdf
  \DeclareGraphicsExtensions{.eps,.pdf,.png,.jpg}
\else
  \DeclareGraphicsExtensions{.eps}



%
\fi
\hyphenation{op-tical net-works semi-conduc-tor}

\begin{document}
%
\title{ Self2Seg: Single-Image Self-Supervised Joint Segmentation and Denoising}
\author{{{Nadja Gruber \thanks{Corresponding author: nadja.gruber@uibk.ac.at}} $^,$\thanks{Department of Mathematics, University of Innsbruck, Austria}} $^,$\thanks{VASCage-Research Centre on Vascular Ageing and Stroke, Innsbruck, Austria} 
\and
J. Schwab \thanks{MRC Laboratory of Molecular Biology, Cambridge, UK}
\and N. Debroux \thanks{Institut Pascal, Université Clermont Auvergne, Clermont-Ferrand, France}
\and N. Papadakis \thanks{Institut de Mathématiques de Bordeaux, Bordeaux, France}
\and M. Haltmeier \footnotemark[2]}

\maketitle


%
\maketitle

\begin{abstract}
We develop Self2Seg, a self-supervised method for the joint segmentation and denoising of a single image. To this end, we combine the advantages of variational segmentation with self-supervised deep learning. One major benefit of our method lies in the fact, that in contrast to data-driven methods, where huge amounts of labeled samples are necessary, Self2Seg segments  an image into meaningful regions without any training database.  Moreover, we demonstrate that self-supervised denoising  itself is significantly improved through the region-specific learning of Self2Seg.
  Therefore, we introduce a novel self-supervised energy functional in which denoising and segmentation are coupled in a way that both tasks benefit from each other. We propose a unified optimisation strategy and numerically show that for noisy microscopy images our proposed joint approach outperforms its sequential counterpart as well as alternative methods focused purely on denoising or segmentation. 
\end{abstract}


\section{Introduction}\label{sec:intro}
Image denoising and segmentation are fundamental problems in image processing~\cite{scherzer2009variational, buchholz2021denoiseg, kollem2019review}. Especially in many biomedical applications, such as fluorescence microscopy or transmission electron microscopy, one is interested in the segmentation of objects. Training data for this task is typically scarce and hard to obtain due to the intrinsic complexity and high noise of such images, as well as the long time required by experts to label them. Therefore, there is a need for unsupervised methods for tackling the two imaging tasks in a unified way. In this work, we propose such a framework  by beneficially  combining  and  extending ideas of the Chan-Verse segmentation model \cite{chan1999active,gruber2024lifting} and self supervised learning  \cite{lequyer2022fast,krull2019noise2void}. 

As we demonstrate in this paper joint self-supervised denoising
and segmentation can benefit a lot from each other. By identifying individual regions, segmentation guides the denoising process to specific image structures. On the other hand, the adaptation of a denoiser to a specific region successfully guides the segmentation process.



There exist at least two main kinds of approaches to tackle both tasks individually. The first class of methods involves the minimization of an energy functional within graph or variational frameworks. The second type of approaches that recently became popular considers deep learning techniques, especially those based on convolutional neural networks~\cite{litjens2017survey}.

Before we start reviewing related works tackling the task of denoising and segmentation in a unified way, we provide the motivation behind the proposed strategy.

\subsection{Motivation and Contributions}\label{sec:contributions}
To overcome limitations of existing algorithms (see ~\ref{rel_work}), in this paper we introduce a novel joint denoising and segmentation algorithm combining the  strengths of variational models and deep learning. 

We employ toy examples to demonstrate how segmentation effectively guides the denoising process. First, we generate a 256$\times$256 image featuring stripe patterns oriented differently across various regions, as depicted in Figure~\ref{fig:toy}. Gaussian noise is manually added to corrupt the image. We utilize two linear neural networks, each serving as an "expert" dedicated to either the foreground or background. These networks consist of a single convolutional layer with a $15\times 15$ filter, trained using a slightly modified version of the Noise2Fast strategy~\cite{lequyer2022fast}, detailed in Section~\ref{sec:numerical}. Specifically, we confine network training to the respective image regions by masking the loss function and restricting training to $30\times 30$ boxes, as illustrated in Figure~\ref{fig:toy}. Not surprisingly, we observe that the learned filters adapt to the signal structure within their designated regions, leading to higher error patterns in untrained areas. This observation forms the basis for leveraging region-specific denoising for segmentation. Further experimental details for this toy example are available in Section~\ref{sec:experiments}.


\begin{figure}[ht]
    \centering   \includegraphics[width = 0.8\columnwidth]{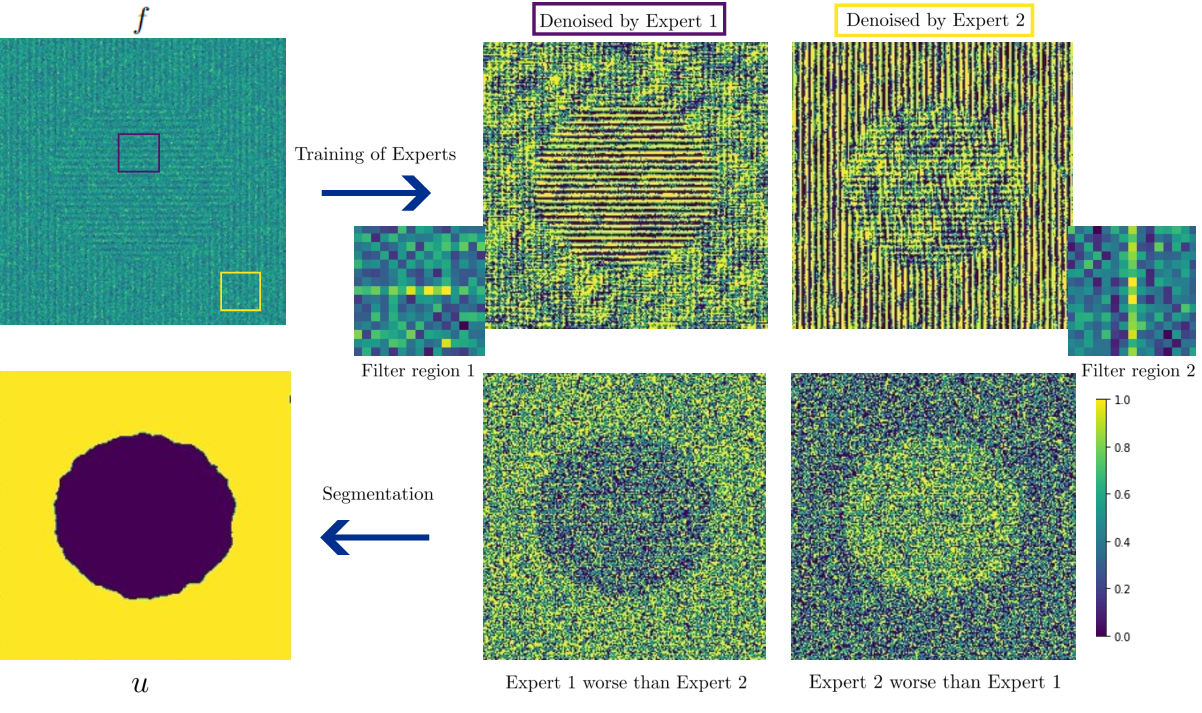}
    \caption{Visualisation of the idea behind the proposed joint denoising and segmentation model. Here, we trained two networks consisting of one single filter using the Noise2Fast~\cite{lehtinen2018noise2noise} strategy and restricted the training to the two boxes marked in the noisy image $f$. From the two right binary images in the bottom row, we observe that the two denoising experts perform much better in the  region they have been trained on. The difference images (noisy image minus Denoised by Expert 1 (resp. 2) can then be used in the segmentation process, by exploiting the fact that regions with a small denoising error for the first (resp. second) expert can be assigned as foreground (resp. background). }
    \label{fig:toy}
\end{figure}

\begin{figure}[ht]
    \centering
    \includegraphics[width = 0.93\columnwidth]{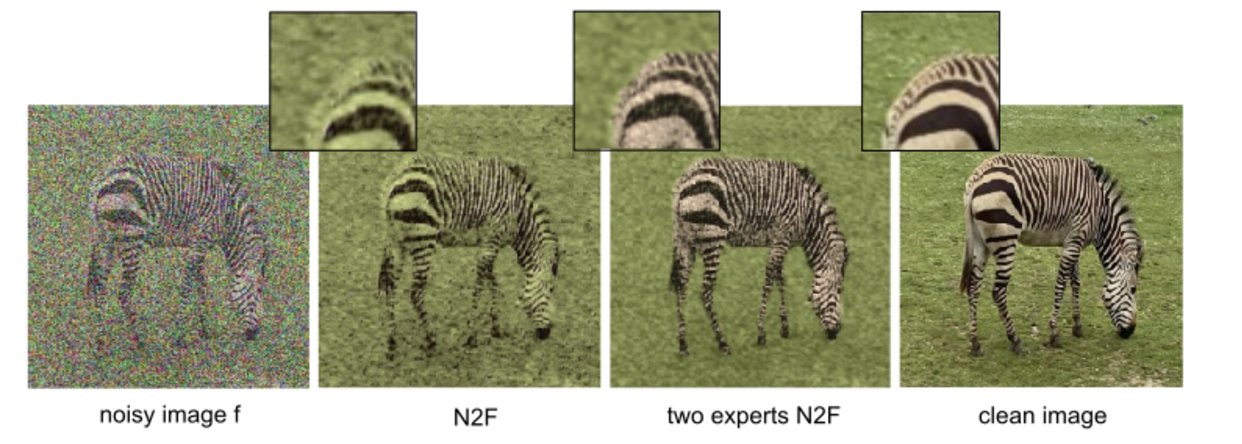}
    \caption{Given noisy RGB input image (corrupted with Gaussian noise, noise level = 0.75), denoised image using Noise2Fast on the whole image, region-specific experts, and ground truth image. We clearly observe sharper edges, and better recovered color information in the ``two-experts''-example.}
    \label{fig:zeb}
\end{figure}
The positive effect of segmentation on the denoising process is even more evident in the natural image shown in Figure~\ref{fig:zeb}. We used the network architecture proposed by the authors in~\cite{lequyer2022fast}, resulting in a non-linear neural network. First, the denoising network was trained and subsequently applied to the whole image. The second image shows the result obtained with two separately trained neural networks. This strategy yields a better visual result, which is further confirmed by PSNR values of 19.69 and 19.12, respectively. The noisy image has been generated by scaling the given clean RGB input image to $[0,1]$, and adding randomly distributed gaussian noise scaled with the maximum pixel value, ensuring that the noise is proportional to the image intensity. Here, we used a manually generated mask of the zebra and background region, and during training, computed the mean squared error (MSE) restricted to the two different regions, respectively.

More specifically, we minimize  the loss function 
\begin{equation} \label{eq:joint-intro}
\begin{aligned}
\mathcal {E}_{f,\lambda} (u, \Do_F,  \Do_B) 
= \lambda\lvert u\rvert_{\text{TV}}& + \int_\Omega \left(f(x) - \Do_F(f) \right)^2 u(x) dx  
 \\ &\hspace{0.1\columnwidth}+\int_\Omega \left(f(x) - \Do_B(f)(x)\right)^2(1-u(x))dx\, 
\end{aligned}
\end{equation} 
jointly over  $u$, $\Do_F$ and  $\Do_B$, taken from a specific class $\mathcal{U}\times\mathcal{D}_F\times\mathcal{D}_B$. Here $u$ and $1-u$ take values in $[0,1]$ and define  the  foreground and background regions and $\Do_F$ and $\Do_B$ are denoisers that should adapt to the two regions. Coupling a variational segmentation model with a self-supervised denoising CNN, overcomes  the need for labeled data and pre-training. We integrate denoising and segmentation tasks to mutually enhance performance.

Specifically, we develop dedicated denoising networks for foreground and background regions, respectively. In that way, the overall denoising performance is improved by exploiting performance differences between the regions to derive the segmentation mask.

Conversely, we demonstrate that existing self-supervised denoising methods such as Noise2Self~\cite{bae2017convex} or Noise2Fast~\cite{lequyer2022fast}  benefit from splitting up the image into regions by exploiting different structures and performing region-specific learning.

 Unlike most previous deep learning methods for segmentation, our approach  is self-supervised  and achieves comparable results using just a single image. In contrast to existing approaches, we actually exploit the  performance difference of denoisers which can vary across the different regions in the image, and show that by the way our approach is designed, both denoising and segmentation tasks have a positive influence on each other.

\subsection{Related work}
\label{rel_work}
\subsubsection{Joint denoising and segmentation}
In order to improve noise robustness and handle intensity inhomogeneities in the Chan-Vese model, several local region-based extensions have been proposed in \cite{li2007implicit, zhang2010active, niu2017robust}. Nevertheless, these methods remain sensitive to manually crafted features and the initial contour. Pre-filtering tools to better prepare the image for segmentation are considered in \cite{cai2013two, li2020three, zhan2013improved}. The work by \cite{liu2014signal} proposes a segmentation-based image denoising algorithm for signal-dependent noise. After initial denoising, segmentation is applied to the pre-filtered image, and for each segment, the noise level is estimated. Subsequently, for each region, a separate denoiser is applied. In \cite{cai2015variational}, a model tackling the segmentation of images with a high level of noise or blurriness is presented. To this end, they propose a variational approach, coupling an extension of the piecewise constant Mumford-Shah model~\cite{mumford1989optimal} with an image restoration model, making it more robust in processing the given corrupted image. A variational approach for joint reconstruction and segmentation is proposed in \cite{corona2019enhancing}, where a model consisting of a total variation regularized reconstruction from undersampled data and a Chan-Vese-based segmentation is used. The authors show the improvement of the joint reconstruction and segmentation performance compared to the sequential approach. In another paper by \cite{ramlau2007mumford}, the Mumford-Shah level set method is shown to improve the quality of reconstructed images and the accuracy of segmentation.

\subsubsection{Self-supervised image denoising}\label{sec:self-super}

In recent times, self-supervised deep learning techniques got increasing attention in the field of machine learning.

In~\cite{ulyanov2018deep}, Ulyanov et al. exploit the fact that the internal structure of CNNs inherently resonates with the distribution of natural images, and utilize this observation for image restoration without the need for additional training data. For each single image to restore, this method thus proposes to train a CNN to reconstruct the considered image. The idea is that early stopping of the training allows for the recovery of a regularized, denoised image. A different strategy is proposed in Noise2Noise~\cite{lehtinen2018noise2noise}, where noisy image pairs are mapped to one another. The drawback of the latter method is that it still relies on the availability of such pairs. In practice, even acquiring two noisy realizations of the same image content is often difficult~\cite{buchholz2021denoiseg}. To this end, self-supervised training methods operating on one single noisy image, such as Noise2Void~\cite{krull2019noise2void}, Noise2Self~\cite{batson2019noise2self}, and more recently, Noise2Fast~\cite{lequyer2022fast}, have been proposed as promising alternatives. The self-supervision is accomplished by excluding/masking the center (blind spot) of the receptive field of the network. In this type of training, it is assumed that the noise is pixelwise independent and that the true intensity of a pixel can be predicted from the local image context, with the exception of the blind spots~\cite{krull2019noise2void}. As one ingredient of our method we utilized the training strategy adapted from Noise2Fast~\cite{lequyer2022fast}, which is a variant of Noise2Self/Noise2Void. Noise2Fast offers the advantages of computational efficiency while maintaining strong performance. The method itself will be explained in more detail in Sections~\ref{sec:problem} and~\ref{sec:numerical}.

\subsubsection{Learning methods for denoising and segmentation}

In the context of microscopy data, purely deep learning-based approaches dealing with both segmentation and denoising are proposed in \cite{prakash2020leveraging, buchholz2021denoiseg}. In \cite{prakash2020leveraging}, it is demonstrated on various microscopy datasets that the use of self-supervised denoising priors improves the segmentation results, especially when only a few ground truth segmentation masks are available for training. In \cite{buchholz2021denoiseg}, the authors propose \textsc{DenoiSeg}, consisting of a U-Net for image segmentation and the self-supervised denoising scheme Noise2Void~\cite{krull2019noise2void}, which are combined and trained with a common loss. It is shown that global optimization outperforms the sequential counterpart, where the image is denoised first and then segmented. To reach high denoising performance, a huge amount of noisy data is required. Moreover, the loss function is just the sum of the segmentation and denoising losses. The fusion of the two tasks is achieved by sharing the network; the loss function does not incorporate any coupling between the two tasks. A slightly different approach is introduced in \cite{liu2017image}. Nevertheless, this method requires ground truth data for both imaging tasks.

The paper is organized as follows: In Section \ref{sec:problem}, we formulate the problems and review algorithmic components. Section \ref{sec:jointmodel} details our proposed algorithm, followed by numerical implementation in Section \ref{sec:numerical}. We apply the method to microscopy data in Section \ref{sec:experiments} and demonstrate its versatility on natural images with minimal user guidance. Finally, we conclude and discuss avenues for future research.

\section{Problem Description}\label{sec:problem}

We will now fix the notation, formalize the problem, and describe the main ingredients that are used for the proposed unified denoising and segmentation method.

In the following, we denote by $\Omega\subset\mathbb{R}^2$ a bounded set with Lipschitz boundary, and by $\mathbb{F}$ a space of functions $f\colon\Omega\rightarrow\mathbb{R}^d$, with $d=1$ for grayscale images, and $d=3$ in the RGB case. We consider a given noisy image $f\in \mathbb{F}$, which we want to jointly denoise, and split up into $C$ different regions.

\begin{problem}[Image Denoising]
The goal of image denoising is to recover a clean image $g$ from a noisy observation $f$ which follows an image degradation model $f = g + n$, where $n$ is the signal-degrading noise that we want to remove.
\end{problem}

Note that although other degradation types are possible, we assume an additive model here, and specifically, we will consider random noise with an expected value of zero.

\begin{problem}[Image Segmentation]
Image segmentation refers to the process of automatically dividing an image into meaningful regions. Based on specific characteristics of a given image $f\in\mathbb{F}$, one is interested in splitting the image domain into two (in the case of binary segmentation) regions $\Sigma$ and $\Omega\setminus\Sigma$. In the case of multiclass segmentation, the objective is to build a partition $\Omega=\bigcup_{i=1}^C\Sigma_i$ of the image domain into $C$ disjoint regions (classes), where each of the regions $\Sigma_1,\dots, \Sigma_{C-1}$ represents a specific structure of objects in $f$, and $\Omega\setminus(\Sigma_1\uplus \Sigma_2\uplus\dots \uplus\Sigma_{C-1})$ represents the background.
\end{problem}

In this work, we address these two problems simultaneously by designing an energy functional in a way that both tasks benefit from each other. Next, we discuss the two main components that form the basis of our approach.

\subsection{Convex Chan-Vese Formulation}
In~\cite{chan2006algorithms}, Chan et al propose to relax the binary Chan-Vese segmentation problem and let the desired solution $u$ take values in $[0,1]$. The resulting convex energy is 
\begin{align}\label{nikolova}
    \min_{0\leq u\leq 1}\int_{\Omega}\lvert \nabla u(x)\rvert + \lambda\left(\int_{\Omega}(c_1 - f(x))^2 u(x)\dx + \int_\Omega (c_2 - f(x))^2(1-u(x))\dx\right).
\end{align}
The authors showed that, for any fixed constants $c_1, c_2\in\mathbb{R},$ a global minimiser for the non-convex problem can be found by carrying out the minimisation in~\eqref{nikolova}, and setting $\Sigma = \{x:u(x)>\tau\}$ for a.e. $\tau\in[0,1]$.

Though the model is convex 
it still suffers from difficulties in segmenting images where the piecewise constant assumption is not a suitable prior for the different regions in the image, or if the image is corrupted by severe noise. These issues are the main problems to solve in the current paper by substituting $c_1$ and $c_2$ by more flexible neural network denoisers.

\subsection{Self-supervised single-image based denoising}
Before presenting the self-supervised loss, we introduce a very important property of denoising functions we are interested in.

Formulated in a very general way, for a given noisy image $f = g+n$, self-supervised denoising methods are based on some variant of the self supervised loss
\begin{align}\label{eq:denoisloss}
    \mathcal{L}_f(\Do) = \int_{\Omega} (\Do(f)(x)-f(x))^2\dx,\, \Do\in\mathcal{D}
\end{align}
minimised with an optimisation strategy, that excludes the identity function, $\Do = \rm{Id}$, which would be the exact minimiser but does not yield a denoised image.

One strategy to overcome this problem is the method introduced in \cite{ulyanov2018deep} where a generative model $\Do$ for minimising \eqref{eq:denoisloss} is combined with early stopping. In this framework, the convolutional structure and early stopping prevents $\Do$ to learn the fine image features (noise) to obtain a denoised image. 
Another strategy is linear filtering with a restriction on the filter~\cite{batson2019noise2self, krull2019noise2void}. For example, a filter which is zero in its central position, and therefore not taking into account the information of this pixel but only the surrounding areas can be used to denoise an image minimising \eqref{eq:denoisloss}.
Another type of method tries to minimise a slightly different functional. Motivated by the previous example, the authors of~\cite{lequyer2022fast} introduce $N$ random binary masks $\mathcal{H}_k$ that delete information in the image. Training is then done using the loss function
\begin{align}\label{eq:n2f_loss}
    \mathcal{L}_f(\Do) = \frac{1}{N}\sum_{k=1}^N\int_\Omega (\-\Do(\mathcal{H}_k\cdot f)(x)-f(x))^2\cdot(1-\mathcal{H}_k)\dx.
\end{align}
This training strategy prevents the network from learning the identity operator. In this work, we use a variant of Noise2Fast~\cite{lequyer2022fast}, which is numerically cheaper than Noise2Void or Noise2Self, and where $\mathcal{H}_k, k=1,\dots, N$, consist of even and odd pixels of the image, respectively.

\section{Proposed Joint Denoising and Segmentation}\label{sec:jointmodel}
We now introduce our joint model, inspired by the observations described in Section~\ref{sec:contributions}. To control binary segmentation, we propose to train two denoising neural networks, each focusing on performing well in one of the regions to be segmented (cf. Figure \ref{fig:toy} and \ref{fig:zeb}). We denote these ``experts'' by $\Do_F$ for the foreground, and $\Do_B$ for the background. These experts are neural networks with parameters $F$ and $B$, that are trained with a modified  denoising strategy. 
Let us mention that the model is presented in the case of two regions, but the extension to multi-class is straightforward, following for instance the framework in~\cite{gruber2024lifting,bae2017convex, mevenkamp2016variational}


\subsection{Joint energy functional}\label{sec:proposed}
 In the following, we denote by $\mathrm{BV}(\Omega)$ the space of all integrable functions $u:\Omega\rightarrow\mathbb{R}$ with bounded total variation $|u|_{\text{TV}}$, and consider the  admissible set
\begin{align*}
    \U\coloneqq\{u\in \mathrm{BV}(\Omega)\mid 0\leq u \leq 1\}.
\end{align*}
Further, let $i_{\U}:\mathrm{BV}(\Omega)\rightarrow[0,\infty]$ denote the associated indicator function, which is 0 inside $\U$, and $\infty$ elsewhere. 
The parameters of the two denoising experts, $\Do_F$ and  $\Do_B$ are denoted by $\boldsymbol{W}=(F,B)\in\mathbb{R}^{L_1\times L_2},$ and are respectively dedicated to the foreground and the background. Note that the number of parameters for $\Do_F$ and $\Do_B$ do not necessarily have to be the same. These two experts are neural networks trained using the strategy proposed in~\cite{lequyer2022fast}. We consider the joint model
\begin{equation} \label{eq:joint2}
\begin{split}
\mathcal {E}_{f,\lambda}(u, \Do_F, \Do_B) 
= i_{\U}(u) + \lambda\lvert u\rvert_{\text{TV}} &+ \int_\Omega \left(f(x)-\Do_F(f)(x)\right)^2 u(x) \dx \\ 
&+\int_\Omega \left(f(x)-\Do_B(f)(x)\right)^2(1-u(x))\dx\, .
\end{split}
\end{equation}
Note that for fixed network parameters $\boldsymbol{W}$, the proposed energy is convex in $u$.
Moreover, we can threshold the result and still have a global optimum (see Theorem~\ref{thresholding}).

Figure~\ref{example_zebri} illustrates the idea behind the proposed segmentation model. For greyscale images, one can initialise the algorithm by thresholding image values. In more complex cases, a user can be asked to provide representative boxes for the background and foreground regions. Then, alternately, the denoising experts are trained on subsets of the two different segmented regions and the segmentations are updated. In practice, the data fidelity term in~\eqref{eq:joint2} is updated given the denoising performance of the two experts $\Do_F$ and $\Do_B$. For fixed network parameters $\boldsymbol{W}$, the energy~\eqref{eq:joint2} is minimised. Repeating this procedure until a convergence criteria is met, we obtain the segmentation mask $u$, as well as the denoised image $g\approx u\odot \Do_F(f) + (1-u)\odot\Do_B(f)$.
\begin{figure}[ht]
    \centering
    \includegraphics[width = 0.93\columnwidth]{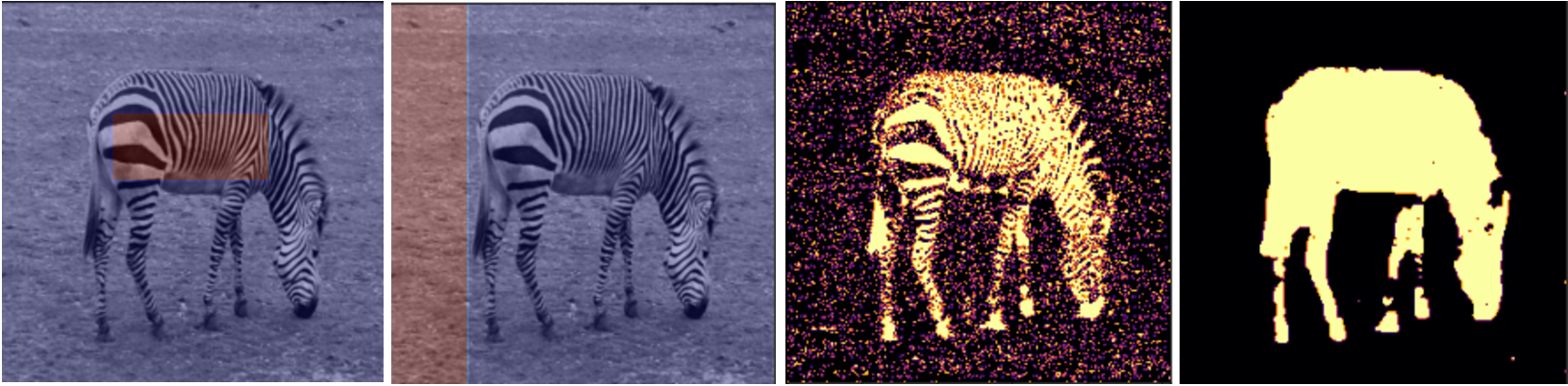}
    \caption{The first image shows the given greyscale input image $f$, and user defined boxes representing rough foreground and background regions. The third image highlights pixels where the foreground expert denoiser performs better than the background one, while the last image is the segmentation result obtained by minimising the proposed energy~\eqref{eq:joint2}.}
    \label{example_zebri}
\end{figure}

\begin{example}
Here, we give examples for neural networks that act as denoisers and relate to existing approaches. 
\begin{itemize}
    \item\textbf{Constant Background:}
In case where the background is assumed constant, one could simply assume that $\Do_B(f)=B\odot\mathbbm{1}$, which corresponds to estimate a scalar value $B$ being the mean value of the given image inside the corresponding region as in the original Chan and Vese model.
    \item\textbf{Linear filter:}
    In this case, the network is linear with respect to the network parameters $B$, more precisely,
$\Do_B(f)=\omega_{B}\ast f$, leading to a bi-convex energy functional~\eqref{eq:joint2}. In our toy example in Figure~\ref{fig:toy}, we have applied such a linear network consisting of one single filter of kernel size $15\times 15$.
\item\textbf{Filtering of data fidelity term:}
    When one of the regions is assumed to be constant and high noise levels are present, mean filtering improves the results. The data fidelity terms of energy~\eqref{eq:joint2} can  then be  replaced by $\int_{\Omega}\left[K_{\sigma}\ast\left(f-\Do_F(f)\right)\right] ^2u$ and $\int_{\Omega}\left[K_{\sigma}\ast\left(f-\Do_B(f)\right)\right] ^2(1-u)$, respectively, where  $K_\sigma$ is a mean filter with kernel size $\sigma$.
    A similar approach has been done in~\cite{li2007implicit}, where a more robust version of the Chan-Vese model~\cite{chan1999active} has been proposed.

\item\textbf{Generic CNN:}
   Any typical state of the art denoising neural network (Deep image prior~\cite{ulyanov2018deep}, Noise2Void~\cite{krull2019noise2void}) can be used in our framework. Note, that in this case the bi-convexity of energy~\eqref{eq:joint2} is not ensured anymore.
\end{itemize}
\end{example}
In the next paragraph, we discuss in more detail the joint alternating optimisation procedure we propose to minimise energy~\eqref{eq:joint2}.
\subsection{Joint optimisation}\label{jointopti}
We propose to iteratively optimise problem~\eqref{eq:joint2} with an alternating procedure~\cite{csiszar1984information}. In case the denoising step does not exactly minimise energy~\eqref{eq:joint2}, we actually alternate between minimising two slightly different functionals. For the sake of readability, this is not indicated in the notation. 
We start with the initialisation of the segmentation mask $u.$ This is either achieved by thresholding for greyscale images, or as shown in Figure~\ref{example_zebri}, manually choosing boxes representing the different regions to segment in the image. Then, based on the initial guess, the denoising expert(s) $\Do_F,$ and $\Do_B$ are trained on the given initial masks. To this end, we use the ADAM optimiser~\cite{kingma2014adam} until convergence.
As a next step, for fixed network parameters $\boldsymbol{W}=(F,B)$, we update the segmentation mask $u$. For fixed $\boldsymbol{W},$ the energy functional~\eqref{eq:joint2} is convex, and all the necessary assumptions for the application of the primal dual algorithm~\cite{chambolle2011first} are fulfilled. A more detailed description on the considered discrete schemes is provided in Section~\ref{sec:numerical} (see Algorithm~\ref{alg2}). These alternate steps are repeated as long as the decrease of energy~\eqref{eq:joint2} is greater than $15$ percent (i.e. $p=0.15)$, which we empirically found to give a good compromise between computation speed and quality of the results. 

The overall joint optimisation scheme is presented in Algorithm~\ref{algo_alternate}. For both denoising experts, we use the same network architecture, namely a convolutional neural network consisting of four sequential instances of convolutional layers, each of them consisting of 64 channels. The last layer is again a convolution with a $1\times1$ kernel size, and the final output is obtained by applying sigmoid activation function. In each denoising update step, we use the same strategy as proposed in~\cite{lequyer2022fast}, namely after each training epoch, the MSE between the noisy input image and ``denoised'' network output is computed. If this metric does not change much for 100 epochs, training is terminated, and the average of
the last one hundred validation images is the final denoised output.
A sketch of the alternating procedure is provided in Figure~\ref{fig:my_label}. 
\begin{algorithm}
\caption{Alternating optimisation scheme.}\label{algo_alternate}
\begin{algorithmic}
\STATE{\textsc{Input}: noisy input image $f$ }
\STATE{\textsc{Initialisation}}: $u^0\leftarrow \boldsymbol{1}_{\{f>\epsilon\}}$, $\boldsymbol{W}^0=\boldsymbol{W}_0$, choose $\lambda>0$ and set $p\leftarrow 0.15$
\WHILE{$\mathcal{E}^k_{f,\lambda}(u^k,\boldsymbol{W}^k)/\mathcal{E}^{k-1}_{f,\lambda}(u^{k-1},\boldsymbol{W}^{k-1})\geq p\cdot\mathcal{E}^{k-1}_{f,\lambda}(u^{k-1},\boldsymbol{W}^{k-1})/\mathcal{E}^{k-2}_{f,\lambda}(u^{k-2},\boldsymbol{W}^{k-2})$}
\STATE{           $\boldsymbol{\boldsymbol{W}}^{k+1}\leftarrow\argmin_{\boldsymbol{W}} \mathcal{E}^k_{f,\lambda}(u^{k+1},\boldsymbol{W})$ 
            \COMMENT{with a few ADAM iterations for $F$ and Chan and Vese update for the background if $\Do_B(f)=\Do_B\mathbbm{1}$}}
            \STATE {$u^{k+1}\leftarrow\argmin_{u} \mathcal{E}^k_{f,\lambda}(u,\boldsymbol{W}^k)$  
            \COMMENT{with Algorithm~\ref{alg2})}}
\ENDWHILE
\end{algorithmic}
\end{algorithm}

In the following paragraph, we will discuss the convergence property of Algorithm~\ref{algo_alternate}.

\begin{figure*}[!ht]
    \centering
    \includegraphics[width = 0.99\columnwidth]{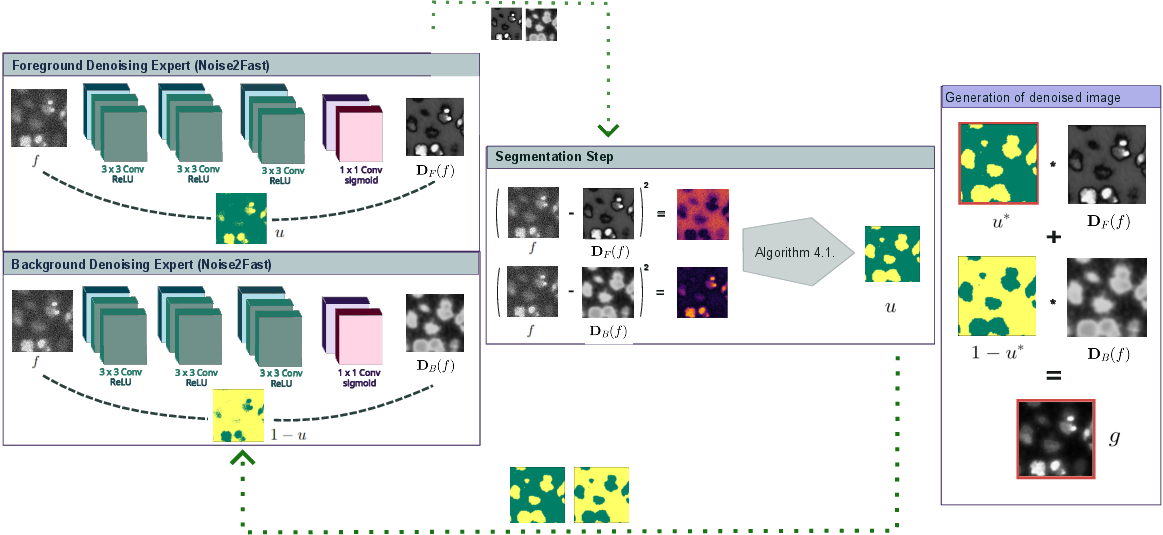}
    \caption{Alternating optimisation scheme. As a first step, regions are provided for the training of the two denoising experts using the  strategy. These regions can be obtained by thresholding image values or by manually choosing boxes. The differences between the given noisy image $f$ and network outputs $\Do_F(f)$ and $\Do_B(f)
$, are used in the subsequent segmentation step, minimising $\mathcal{E}_{\lambda,f}(\cdot, \boldsymbol{W})$ with Algorithm~\ref{alg2}. }
    \label{fig:my_label}
\end{figure*}

\subsection{Theoretical Results}\label{sec:theoretical}
In this section, we discuss some theoretical results of the proposed energy functional and the presented alternating algorithm. Note that these results hold if the denoiser is trained by minimising~\eqref{eq:denoisloss}.
\begin{remark}[Monotonicity of alternating minimisation]
    The proposed energy functional~\eqref{eq:joint2} is continuous and bounded from below.  
   Further, for each $k>0$, the following relations hold
\begin{align*}
    \mathcal{E}_{f,\lambda}(u^{(k)}, \boldsymbol{W}^{(k+1)})&\leq \mathcal{E}_{f,\lambda}(u^{(k)}, \boldsymbol{W}^{(k)})\\
        \mathcal{E}_{f,\lambda}(u^{(k+1)}, \boldsymbol{W}^{(k+1)})&\leq \mathcal{E}_{f,\lambda}(u^{(k)}, \boldsymbol{W}^{(k)}).
\end{align*}
Hence, the generated sequence $\{\mathcal{E}_{f,\lambda}(u^{(k)}, \boldsymbol{W}^{(k)})\}_{k\in\mathbb{N}}$ converges monotonically.
\end{remark}


\begin{theorem}[Convergence of Algorithm~\ref{algo_alternate}]\label{convegence}
Assume that the level set $S^0=\{(u,\boldsymbol{W}): \mathcal {E}_{f,\lambda}(u,\boldsymbol{W})\leq\mathcal {E}_{f,\lambda}(u^0,\boldsymbol{W}^0)\}$ of $\mathcal {E}_{f,\lambda}$ defined in~\eqref{eq:joint2} is
compact and that $\mathcal {E}_{f,\lambda}$  is continuous on $S^0$. Then, the sequence $\{(u^k,\boldsymbol{W}^k\}$ generated by  Algorithm~\ref{algo_alternate} is defined and bounded. Moreover, 
every cluster point of $\{(u^k,\boldsymbol{W}^k)\}$ is a stationary
point of $\mathcal {E}_{f,\lambda}$.
\end{theorem}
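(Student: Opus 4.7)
The plan is to follow the classical two-block Gauss--Seidel argument in the spirit of Csisz\'ar--Tusn\'ady and Grippo--Sciandrone. By assumption, $\mathcal{E}_{f,\lambda}$ is continuous on the compact level set $S^0$; the monotonicity remark immediately above the theorem shows that each alternating update only decreases the energy, so the whole sequence $\{(u^k,\boldsymbol{W}^k)\}$ stays in $S^0$. Compactness therefore guarantees that the iterates are bounded, that the energy sequence $\{\mathcal{E}_{f,\lambda}(u^k,\boldsymbol{W}^k)\}$ converges monotonically to a finite limit $E^\star$, and that the set of cluster points is non-empty.

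Next, I would fix a cluster point $(u^\star,\boldsymbol{W}^\star)$ with $(u^{k_j},\boldsymbol{W}^{k_j})\to(u^\star,\boldsymbol{W}^\star)$ along a subsequence, and by compactness extract a further subsequence (not relabelled) along which the successor $(u^{k_j+1},\boldsymbol{W}^{k_j+1})$ also converges, say to $(\bar u,\bar{\boldsymbol{W}})$. Continuity of $\mathcal{E}_{f,\lambda}$ and uniqueness of the limit $E^\star$ of the energy sequence yield
$$\mathcal{E}_{f,\lambda}(u^\star,\boldsymbol{W}^\star)=\mathcal{E}_{f,\lambda}(\bar u,\bar{\boldsymbol{W}})=E^\star.$$

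The partial-minimisation property is then used as follows. Because $\boldsymbol{W}^{k_j+1}\in\argmin_{\boldsymbol{W}}\mathcal{E}_{f,\lambda}(u^{k_j},\boldsymbol{W})$, the inequality $\mathcal{E}_{f,\lambda}(u^{k_j},\boldsymbol{W}^{k_j+1})\le \mathcal{E}_{f,\lambda}(u^{k_j},\boldsymbol{W})$ holds for every admissible $\boldsymbol{W}$; passing to the limit along the subsequence and using continuity gives $\mathcal{E}_{f,\lambda}(u^\star,\bar{\boldsymbol{W}})\le \mathcal{E}_{f,\lambda}(u^\star,\boldsymbol{W})$ for all $\boldsymbol{W}$. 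Combined with $\mathcal{E}_{f,\lambda}(u^\star,\bar{\boldsymbol{W}})=\mathcal{E}_{f,\lambda}(u^\star,\boldsymbol{W}^\star)$, this forces $\boldsymbol{W}^\star$ itself to be a partial minimiser of $\mathcal{E}_{f,\lambda}(u^\star,\,\cdot\,)$. The same argument applied to the $u$-subproblem (which is exactly solved by the convex primal--dual step of Algorithm~\ref{alg2}) shows $u^\star\in\argmin_u \mathcal{E}_{f,\lambda}(u,\boldsymbol{W}^\star)$. Writing the block-wise first-order optimality conditions, $0\in\partial_u\mathcal{E}_{f,\lambda}(u^\star,\boldsymbol{W}^\star)$ and $0\in\partial_{\boldsymbol{W}}\mathcal{E}_{f,\lambda}(u^\star,\boldsymbol{W}^\star)$, then delivers stationarity.

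The main obstacle I anticipate is the $\boldsymbol{W}$-block partial minimisation at the cluster point: the statement implicitly requires each inner subproblem to be solved to a true argmin, which is realistic for the constant-background or linear-filter experts (where the subproblem is convex in $\boldsymbol{W}$) but not, strictly speaking, for a generic non-convex CNN trained with finitely many ADAM steps, as already flagged at the start of Section~\ref{sec:theoretical}. A clean way to absorb this would be to add an assumption that the inner solver returns a partial minimiser up to a tolerance $\varepsilon_k\to 0$, or to restrict the theorem to denoiser classes for which the $\boldsymbol{W}$-subproblem is convex; after such a caveat, the argument above goes through verbatim. The rest is standard: continuity plus compactness plus exact block-wise optimisation is exactly the Grippo--Sciandrone setting, and two-block alternating minimisation is known to avoid the pathological examples that arise with three or more blocks, so no additional regularity assumption is required.
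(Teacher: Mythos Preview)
Your direct argument is correct and essentially reproves, in the two-block case, the result the paper simply invokes: the authors' proof is a one-line appeal to Theorem~4.1 of Tseng~\cite{tseng2001convergence} on block coordinate descent, noting that only two blocks alternate and that the coupling between $u$ and $\boldsymbol{W}$ in $\mathcal{E}_{f,\lambda}$ is smooth. Your route---monotonic decrease, compactness, extraction of subsequences, and passing the partial-minimisation inequalities to the limit---is precisely the Grippo--Sciandrone mechanism underlying Tseng's theorem, so the two approaches are not genuinely different in substance, only in presentation. The advantage of your version is that it is self-contained and makes explicit where the exact-argmin assumption on each block enters (your caveat about ADAM and non-convex CNN denoisers is well placed and matches the paper's own disclaimer at the start of Section~\ref{sec:theoretical}); the paper's version trades this transparency for brevity. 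One small point of care: to conclude that $u^\star$ (and not merely $\bar u$) is a partial minimiser in the $u$-block, it is cleanest to use that $u^{k_j}$ itself was already the $u$-minimiser for $\boldsymbol{W}^{k_j}$ from the \emph{previous} outer iteration, rather than chasing the successor index; with that indexing the limit passage is immediate.
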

\begin{proof}
 This is a direct application of Theorem 4.1 in~\cite{tseng2001convergence}, using that (i) we only alternate between two variables $u$ and $\boldsymbol{W}$, (ii) 
 the coupling between $u$ and $\boldsymbol{W}$ in $\mathcal {E}_{f,\lambda}$ is smooth.
\end{proof}

\begin{remark}
    The energy~\eqref{eq:joint2}, which is convex for fixed network parameters $\boldsymbol{W}=(F,B)$ is a relaxation of the fully non-convex problem
    \begin{align}\label{non-convex}
        \mathcal{E}(\Sigma,\boldsymbol{W}) = \text{Per}(\Sigma,\Omega) + \int_{\Sigma}(f-\Do_F(f))^2 \dx  + \int_{\Omega\setminus\Sigma}(f-\Do_B(f))^2 \dx,
    \end{align}
    where $\Sigma\subset\mathbb{R}^2$, and $\Omega\setminus\Sigma$ are the two regions of the given image $f(x),$ and Per$(\Sigma,\Omega)$ is the perimeter of the interface separating these two regions.
\end{remark}
\begin{theorem}[Thresholding]\label{thresholding}
    For any fixed $\boldsymbol{W}$, a global minimiser for the non-convex problem $\min_{\Sigma,\boldsymbol{W}}\mathcal{E}(\cdot, \boldsymbol{W})$ in~\eqref{non-convex} can be found by carrying out the minimisation of $\min_u\mathcal{E}_{f,\lambda}(\cdot,\boldsymbol{W})$,
    and then setting $\Sigma(\tau)=\{x:u(x)\geq\tau\}$ for a.e. $\tau\in[0,1]$.
    \end{theorem}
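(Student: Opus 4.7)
The plan is to adapt the classical thresholding argument of Chan--Esedo\u{g}lu--Nikolova (the same result that justified the convex relaxation~\eqref{nikolova} in the original Chan--Vese setting) to our setup, where the scalar constants $c_1,c_2$ are replaced by the fixed functions $\Do_F(f)$ and $\Do_B(f)$. The two essential tools are the coarea formula for $\mathrm{BV}$ functions and the layer-cake representation of nonnegative integrable functions.

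First I would introduce the residual maps $r_F(x)\coloneqq (f(x)-\Do_F(f)(x))^2$ and $r_B(x)\coloneqq (f(x)-\Do_B(f)(x))^2$, which are fixed once $\boldsymbol{W}=(F,B)$ is fixed. For any $u\in\U$, the layer-cake identity gives $u(x)=\int_0^1\mathbf{1}_{\{u\geq\tau\}}(x)\,d\tau$ and $1-u(x)=\int_0^1\mathbf{1}_{\{u<\tau\}}(x)\,d\tau$, while the coarea formula yields $|u|_{\mathrm{TV}}=\int_0^1\peri(\{u\geq\tau\},\Omega)\,d\tau$. Fubini then lets me rewrite
\begin{equation*}
\mathcal{E}_{f,\lambda}(u,\boldsymbol{W})=\int_0^1\Bigl(\lambda\,\peri(\Sigma_\tau,\Omega)+\int_{\Sigma_\tau}r_F\,\dx+\int_{\Omega\setminus\Sigma_\tau}r_B\,\dx\Bigr)d\tau=\int_0^1\mathcal{E}(\Sigma_\tau,\boldsymbol{W})\,d\tau,
\end{equation*}
where $\Sigma_\tau\coloneqq\{x\in\Omega:u(x)\geq\tau\}$ (interpreting the perimeter term in~\eqref{non-convex} as carrying the same weight $\lambda$, which is the standard convention implicit in the statement).

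Next I would exploit the representation just derived. Let $u^\star$ be any minimiser of $\mathcal{E}_{f,\lambda}(\,\cdot\,,\boldsymbol{W})$ over $\U$; existence follows from the lower semicontinuity of the total variation and the compactness of the unit ball of $\mathrm{BV}$ in $L^1$. For any measurable $\Sigma\subset\Omega$, the indicator $\mathbf{1}_\Sigma$ belongs to $\U$, and the slicing identity collapses to $\mathcal{E}_{f,\lambda}(\mathbf{1}_\Sigma,\boldsymbol{W})=\mathcal{E}(\Sigma,\boldsymbol{W})$. Consequently, for every admissible $\Sigma$,
\begin{equation*}
\int_0^1\mathcal{E}(\Sigma_\tau^\star,\boldsymbol{W})\,d\tau=\mathcal{E}_{f,\lambda}(u^\star,\boldsymbol{W})\leq\mathcal{E}_{f,\lambda}(\mathbf{1}_\Sigma,\boldsymbol{W})=\mathcal{E}(\Sigma,\boldsymbol{W}).
\end{equation*}
Since this upper bound $\mathcal{E}(\Sigma,\boldsymbol{W})$ is independent of $\tau$, by a standard measure-theoretic argument (the mean of $\tau\mapsto\mathcal{E}(\Sigma_\tau^\star,\boldsymbol{W})$ over $[0,1]$ is at most $\mathcal{E}(\Sigma,\boldsymbol{W})$), there exists a set of full measure of levels $\tau\in[0,1]$ for which $\mathcal{E}(\Sigma_\tau^\star,\boldsymbol{W})\leq\mathcal{E}(\Sigma,\boldsymbol{W})$. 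Taking the infimum over admissible $\Sigma$ shows that $\Sigma_\tau^\star=\{u^\star\geq\tau\}$ is a global minimiser of the non-convex problem~\eqref{non-convex} for almost every $\tau\in[0,1]$, which is exactly the claim.

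The main obstacle is the careful bookkeeping in the slicing step: one has to verify that $\Sigma_\tau$ is admissible (a set of finite perimeter) for almost every $\tau$, that the coarea and layer-cake identities apply in the $\mathrm{BV}$ setting with the constraint $0\leq u\leq 1$, and that no measure-zero issues invalidate the conclusion "$\Sigma_\tau^\star$ is optimal for a.e. $\tau$". Each of these is standard, but spelling out the ``almost every $\tau$'' quantifier precisely is the only subtle point; the rest of the proof is a direct chain of identities.
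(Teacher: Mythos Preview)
Your proposal is correct and follows essentially the same route as the paper: both arguments replace the constants $c_1,c_2$ by the fixed residuals $\Do_F(f),\Do_B(f)$ and then invoke the coarea formula together with the layer-cake representation to slice $\mathcal{E}_{f,\lambda}(u,\boldsymbol{W})$ into an integral over the level sets $\Sigma(\tau)$, concluding that a.e.\ level of a relaxed minimiser solves the non-convex problem. Your write-up is in fact more careful than the paper's sketch (you flag the $\lambda$ mismatch between~\eqref{eq:joint2} and~\eqref{non-convex} and spell out the ``a.e.\ $\tau$'' argument), whereas the paper simply records the slicing identity with an additive constant and defers to Chan--Esedo\u{g}lu--Nikolova.
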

\begin{proof}
    The proof is similar to the one in~\cite{chan2006algorithms}(Theorem 2). The only difference is in the data fidelity term, where instead of the fixed constants $c_1$, and $c_2$, we look at fixed network outputs $\Do_F(f)$, and $\Do_B(f)$. 
    By rewriting the $\lvert u\rvert_{\text{TV}}$-term with the co-area formula we show that
$\mathcal {E}_{f,\lambda}(u,\boldsymbol{W})=\int_{0}^{1} \mathcal {E}(\Sigma(\tau),\boldsymbol{W})-C \dtau$, with $C = \int_\Omega (f-\Do_F(f))^2\dx$ being independent of $u$. Thus, we conclude that if $u$ is a minimiser of the energy~\eqref{eq:joint2} for fixed $\boldsymbol{W}$, then for a.e. $\tau\in[0,1]$ the set $\Sigma(\tau)$ has to be a minimiser of~\eqref{non-convex}.
\end{proof}

\section{Numerical Implementation}\label{sec:numerical}
In the following, we describe the numerical implementation of the proposed method. 
\subsection{Segmentation Step}
We can rewrite our segmentation sub-problem in the form
\begin{align}\label{saddlepoint}
    \min_{u\in\mathbb{X}}\pdF(K(u)) + \mathcal{G}(u),
\end{align}
where $K(u) \coloneqq \nabla u$, $\mathcal{F}(v) \coloneqq \|v\|_{1,2},$ and $\mathcal{G}(u) \coloneqq i_\mathbb{A}(u)+\int_{\Omega}(f-\Do_F(f))^2 u + \int_{\Omega}(f-\Do_B(f))^2)(1-u)$.
It holds that $K:\mathbb{X}\rightarrow\mathbb{Y}$ is a linear mapping between Hilbert spaces $\mathbb{X},\mathbb{Y}$ and $\pdF:\mathbb{Y}\rightarrow [0,\infty]$ and $\mathcal{G}:\mathbb{X}\rightarrow [0,\infty]$ are convex and lower semi-continuous functionals, i.e. all the necessary assumptions for the application of the primal dual algorithm framework  proposed in~\cite{chambolle2011first} are fulfilled.

\subsubsection{Discretisation}
In the following, we fix the notation which we use throughout this Section. We  work with discrete images in $\Hr\coloneqq\mathbb{R}^{N_1\times N_2}$, denoting a finite dimensional Hilbert space equipped with an inner product $\langle u, v\rangle = \sum_i u[i]v[i]$ for $u,v\in\Hr$ with $i = (i_1,i_2)\in\{1,\dots,N_1\}\times\{1,\dots,N_2\}.$ The discrete gradient $\nabla = (\nabla_1,\nabla_2):\Hr\rightarrow\Hr\times\Hr$ is defined by forward differences with Neumann boundary conditions,
\begin{align*}
    (\grad_1 u)[i] &\coloneqq
    \begin{cases}
    	(u[i_1+1,i_2]-u[i_1,i_2]) / h & \text{if }  i_1<N_1\\
    	0 &  \text{if } i_1 = N_1
    \end{cases}
    \\[0.2em]
    (\grad_2 u)[i] &\coloneqq
    \begin{cases} (u[i_1,i_2+1]-u[i_1,i_2])/h & \text{if }  i_2 < N_2 \\
	0 &  \text{if }  i_2=N_2  \,.\end{cases}
\end{align*}
Its adjoint  is given by $\grad^* (v_1, v_2) = \grad^*_1 v_1 + \grad_2^* v_2 =: -\div (v_1, v_2)$  where $\div \colon \Hr \times \Hr \to \Hr$ is the  discrete  divergence operator and for  $(v_1, v_2) \in \Hr \times \Hr$ we have   
\begin{align*}
    (\grad^*_1 v_1)[i] & =
    \begin{cases}
    	-(v_1[i_1,i_2] - v_1[i_1-1,i_2]) / h & \text{if }  1<i_1<N_1\\
    	-v_1[1,i_2] &  \text{if } i_1 = 1 \\
    	\phantom{-}v_1[N_1-1,i_2] &  \text{if } i_1 = N_1 
    \end{cases}
    \\[0.1em]
    (\grad^*_2 v_2)[i] &=
    \begin{cases} 
    - (v_2[i_1,i_2] - v_2[i_1,i_2-1])/h & \text{if } 1< i_2 < N_2 \\
	 -v_2[i_1,1] &  \text{if } i_2 = 1 \\
    	\phantom{-}v_2[i_1,N_2-1] &  \text{if } i_2 = N_2  \,.
	\end{cases}
\end{align*}
The discrete, isotropic TV semi-norm of an image $u\in\Hr$ is defined as 
\begin{equation*}
    \norm{ \grad u}_{1,2} \coloneqq \sum_{i} \sqrt{(\grad_1 u[i])^2 + (\grad_2 u[i])^2} \,.
\end{equation*}

The discrete versions of the admissible set and the corresponding indicator function, are $\U = \{u\in\Hr|0\leq u\leq 1\}$, and $i_{\U}.$ The discretisation of the data fidelity term of energy~\eqref{eq:joint2} is written as $\sum_{i}{\boldsymbol{F}}(u[i],\boldsymbol{W})$, where
\begin{align}
\boldsymbol{F}(u,\boldsymbol{W})&\coloneqq d(u,F)+d(1-u,B)\\ \nonumber
    d(u,F)&\coloneqq u\cdot\left(\Do_F(f)-f\right)^2 \\ \nonumber
    d(1-u, B)&\coloneqq (1-u)\cdot\left(\Do_B(f)-f\right)^2.
\end{align} 

Using these notations, the discrete version of energy~\eqref{eq:joint2} reads 
\begin{align}\label{energy:discrete}
    	\mathcal{E}_{f,\lambda}^{CV}(u,\boldsymbol{W}) 
	= i_{\U} (u) +  \lambda  \norm{\grad u}_{1,2} + \sum_i\boldsymbol{F}(u[i],\boldsymbol{W})\, .
\end{align}
The optimisation problem~\eqref{energy:discrete} is in general a non-convex and challenging problem to be solved. We will use alternating minimisation, where we employ the Chambolle-Pock algorithm~\cite{chambolle2011first} for the update step of the segmentation mask $u$, while for updating the network parameters $\boldsymbol{W}=(F,B)$, we apply ADAM optimisation~\cite{kingma2014adam}.

\subsubsection{Segmentation algorithm}
Here, we give details of the minimisation of of the functional~\eqref{eq:joint2} with respect to $u$ for fixed $\boldsymbol{W}$. This corresponds to solve problem~\eqref{saddlepoint} with
\begin{align*}
    \mathbb{X} = \Hr, \,
    \mathbb{Y} = \Hr^{2}, \,
    \pdF = \lambda\lVert v\rVert_{1,2}, \,
    K = \nabla,  \, 
    \mathcal{G} = i_{\mathbb{A}} +\sum_i\boldsymbol{F}(u[i],\boldsymbol{W}).
\end{align*}
As the operator $K$ is linear, and the functionals $\pdF$ and $\mathcal{G}$ are convex and lower semi-continuous, all requirements for the application of the primal dual algorithm proposed in~\cite{chambolle2011first} are fulfilled.  

To implement this algorithm, it is required to compute the Fenchel conjugate $\pdF^*$ of $\pdF$, as well as the proximal mappings of $\pdF^*$ and $\mathcal{G}$. 
We start with the derivation of the Fenchel conjugate of $\pdF$. For $\lVert\cdot\rVert_{1,2}$ it corresponds to the indicator function of the unit ball of the dual norm, resulting in $i_{2,\infty} = \lVert\cdot\rVert^{\ast}_{2,\infty}$. Hence we have $\pdF^*(v)= i_{2,\infty}(v/\lambda)$, the indicator function of $\{v|\lVert\boldsymbol{v}\rVert_{2,\infty}\leq 1\}\subset \Hr^2$. 
As a next step, we compute the proximal operators of $\mathcal{F}^{\ast}$ and $\mathcal{G}$. Recall that the proximal operator of the indicator function $i_{C}$ of some set $C$ is given by the orthogonal projection on $C$. The projection $P_{2,\infty}\colon\Hr\rightarrow\Hr^2$ onto the unit ball in the $(2,\infty)-$norm is thus obtained by
\begin{align*}
    (P_{2, \infty} (\vv))[i,k]= \frac{v[i,k]}{\max\{1, (v_1[i,k]^2 + v_2[i,k]^2)^{1/2}\}}.
\end{align*}
Thus, the proximal operator of $\mathcal{F}^{\ast}$ results in
\begin{align*}
    \prox_{\mathcal{F}^{\ast}}(\vv) = P_{2, \infty, \lambda}(\vv) \coloneqq P_{2, \infty} (\vv/\lambda).
\end{align*}
Further, by introducing $\tilde f=\left(f-\Do_F(f)\right)^2-\left(f-\Do_B(f)\right)^2$, one can finally show that
$$\prox_{\tau \mathcal{G}}(u_0[i])=P_{\U}\left(u_0[i]-\tau \tilde f[i]\right).$$
The overall  primal dual Algorithm~\ref{alg2} is summarised below.
\begin{algorithm}
\caption{Segmentation algorithm based on the minimisation of the energy functional~\eqref{eq:joint2} with respect to $u$ for a fixed $\boldsymbol{W}$.}\label{alg2}
\begin{algorithmic}
\STATE{\textsc{Input}: noisy input image $f \in \Hr$ }
\STATE{\textsc{initialisation}:  $v^0\in  \Hr$,  $u^0, \bar u^0 \in \Hr$}
\WHILE{$\lVert u^{n+1}-u^n\rVert>\epsilon$}
\STATE{$v^{n+1} \gets P_{2, \infty,\lambda} ( v^{n} +  \sigma \boldsymbol{\nabla} \bar u^n  )$}
\STATE{$u^{n+1} \gets P_{\U}(u^{n}  -  \tau  \boldsymbol{\nabla}^\intercal v^{n+1}-\tau \tilde f)$}
\STATE{$\bar u^{n+1} \gets u^{n+1} + \eta(u^{n+1}-u^{n})$.}
\ENDWHILE
\RETURN $u^{n+1}$
\end{algorithmic}
\end{algorithm}

\subsection{Acceleration with a mask prior}\label{sec:mask}
In our experiments, we observed, that one of the denoisers (the one that is trained on the more complex region), tends to improve on the other region. Once this progress has started, it is quite difficult to stop the segmentation mask from expanding and converging to an undesired minimum being the constant segmentation result.
Inspired by the work in
~\cite{baeza2010narrow,yildizoglu2012active}, we now propose to overcome the problem of finding an appropriate stopping criteria, by adding a fidelity term, ensuring that the updated segmentation mask $u^k$ does not deviate too far from its initial guess. 
Assume that we have a reference mask $u_R^0$, then, in the continuous setting, we consider the successive problems:
\begin{equation}\label{eq:joint2_ref}
\begin{aligned}\mathcal{E}^k_{f,\la}(u,\Do_F,\Do_B) 
\coloneqq &i_\U (u) + \lambda \lvert u\rvert_{\text{TV}}+  \int_{\Omega} \left(f-\Do_F(f)\right)^2 u\dx \\&+\int_{\Omega}\left(f-\Do_B(f)\right)^2(1-u)\dx  + \frac\mu{2}||u-u_R^k||^2\,.\end{aligned}
\end{equation}

We can therefore optimise iteratively problem~\eqref{eq:joint2_ref} with the alternate procedure presented in Algorithm~\ref{algo_alternate_acc}. Note, that in this case, as the global energy is changed at each iteration, we do not have convergence guarantee anymore for the alternating procedure.
\begin{algorithm}
\caption{Alternating optimisation scheme with acceleration.}\label{algo_alternate_acc}
\begin{algorithmic}
\STATE{\textsc{Input}: noisy input image $f$ }
\STATE \textsc{Initialisation:} $u^0\leftarrow f$ and $\boldsymbol{W}^0=\boldsymbol{W}_0$, $u_R^0$, choose $\lambda>0$  
\FOR{$k = 1,\dots, N$}
\STATE{           $\boldsymbol{\boldsymbol{W}}^{k+1}\leftarrow\argmin_{\boldsymbol{W}} \mathcal{E}^k_{f,\lambda}(u^{k+1},\boldsymbol{W})$ 
            \COMMENT{with a few ADAM iterations for $F$ and Chan and Vese update for the background if $\Do_B(f)=\Do_B\cdot\mathbbm{1}$}}
            \STATE { $u^{k+1}\leftarrow\argmin_{u} \mathcal{E}^k_{f,\la}(\uu,\boldsymbol{W}^k)$  
            \COMMENT{with Algorithm~\ref{alg_acc})}}
\STATE{$\uu_R^{k+1}=\uu^{k+1}$ (update reference mask)}
\ENDFOR
\end{algorithmic}
\end{algorithm}

To solve the segmentation problem, we reformulate the optimisation of problem~\eqref{eq:joint2_ref} for fixed $\boldsymbol{W}$ as $\min_u\pdF(K(u)) +\mathcal{G}^k(u)$, with 
\begin{align*}\mathcal{G}^k(u)=i_\U (u) &+\frac\mu{2}||u-u_R^k||^2  +  \int_\Omega \left(f(x)-\Do_F(f)(x)\right)^2 u(x)\dx \\&+\int_\Omega \left(f(x)-\Do_B(f)(x)\right)^2(1-u(x))\dx.\end{align*}
Recalling that $\tilde f=\left(f(x)-\Do_F(f)(x)\right)^2-\left(f-\Do_B(f)(x)\right)^2$, we can show that:
$$\prox_{\tau \mathcal{G}^k}(u^0[i])=P_\U\left(\frac{u^0[i]+\tau\mu u_R^k[i]-\tau \tilde f[i]}{1+\tau\mu}\right).$$
Observing that $\mathcal{G}^k$ is $\mu$-strongly convex  in $u$, we consider the accelerated primal dual algorithm of~\cite{chambolle2011first} to solve problem~\eqref{eq:joint2_ref}.

\begin{algorithm} \label{alg_acc}
\caption{Segmentation algorithm based on the minimisation of the energy functional~\eqref{eq:joint2_ref} with respect to $u$ for a fixed $\boldsymbol{W}$.}
\begin{algorithmic}
\STATE{\textsc{Input}: noisy input image $f \in \Hr$ }
\STATE{\textsc{Initialisation}:  $v^0\in  \Hr$,  $u^0, \bar u^0 \in \Hr, \lambda>0, \sigma>0, \tau>0, \boldsymbol{W}$ fixed}
\WHILE{$\lVert u^{n+1}-u^n\rVert>\epsilon$}
\STATE{$v^{n+1} \gets P_{2, \infty,\lambda} ( v^{n} +  \sigma \boldsymbol{\nabla} \bar u^n  )$}
\STATE{$u^{n+1} \gets P_{\U}\left((u^{n}  -  \tau  \boldsymbol{\nabla}^\intercal v^{n+1}+\tau\mu u_R^k-\tau \tilde f )/(1+\tau\mu )\right)$}
\STATE{$\eta=\frac{1}{1+2\mu\tau}, \tau=\tau\eta, \sigma = \frac{\sigma}{\eta}$}
\STATE{$\bar u^{n+1} \gets u^{n+1} + \eta(u^{n+1}-u^{n})$.}
\ENDWHILE
\RETURN $u^{n+1}$
\end{algorithmic}
\end{algorithm}
As we have discussed the numerical implementation of the segmentation step, we now present the discrete setting and implementation of the denoising step.
\subsection{Denoising step using Noise2Fast strategy}
We here detail the denoising of a discretized 2D image $f\in\mathbb{R}^{m\times n}$ composed of a clean signal $g\in\mathbb{R}^{n\times n}$ and noise $n\in\mathbb{R}^{m\times n}$, i.e.
\begin{align*}
    f = g + n.
\end{align*}
For completeness, we introduce $u_B^k,$ which for $k=0$ corresponds to the initialisation of the background region. 
These masks can either be obtained by thresholding the image, or can be given in form of user-provided boxes. For the next update steps, i.e. $k=1,\dots, N$ it holds that $u_B^k=1-u^k.$
Using these notations, for fixed $\boldsymbol{u}^k = (u^k, u_B^k)$, in the $k$-th denoising step of our alternating procedure, the energy functional \eqref{eq:joint2}, reduces to 
\begin{align}\label{denoising_min}
    \min_{\boldsymbol{W}}\sum_i{\boldsymbol{F}}(\boldsymbol{u}^k[i],\boldsymbol{W})=    \min_{F,B}\sum_i \left(\Do_F(f)[i]-f[i]\right)^2\cdot u^k[i]    +\left(\Do_B(f)[i]-f[i]\right)^2\cdot u_B^k[i],
\end{align}
where $\Do_F(f)$ and $\Do_B(f)$ are again the (deep) experts  respectively dedicated to the denoising of the foreground and background. 

We build our denoisers on top of the Noise2Fast method introduced by Lequyer et al. in~\cite{lequyer2022fast}. The authors propose a fast single image blind denoiser, using a special downsampling strategy. More precisely, their method consists in splitting a given image into smaller parts by using a checkerboard downsampling strategy. From a single image, four images are thus generated, by removing one half of all pixels, and shifting the remaining pixels to fill in the gaps left behind. Then, a network is trained to learn the mappings between the resulting downsampled image pairs. Due to the internal redundancy in form of recurrent patches present in images, and the high degree of self-similarity, the neural network will also be able to denoise the whole image instead of the downsampled ones~\cite{buades2011self,zontak2011internal,glasner2009super}. For a more detailed description of the Noise2Fast training strategy, such as the network architecture, we refer the reader to~\cite{lequyer2022fast}.  


In our approach, we use a different loss function as the one described in the work of Lequyer et al~\cite{lequyer2022fast}. Instead of considering the whole image domain for training,  we restrict the optimisation process for the foreground $\Do_F(f)$ (resp. background $\Do_B(f)$) expert to the current segmentation masks $u^k$ (resp. $1-u^k$) obtained by Algorithm~\ref{alg2}.

In a first step, as in~\cite{lequyer2022fast} the downsampled training images are generated in the following way
\begin{align*}
    &f_{\text{even}}(i,j) = f\left(i, 2j + (i\,\text{mod}\, 2)\right)\in\mathbb{R}^{m\times\frac{n}{2}}\\
       & f_{\text{odd}}(i,j) = f\left(i, 2j + (i\,\text{mod}\, 2) + 1)\right)\in\mathbb{R}^{m\times\frac{n}{2}}\\         &f^{\prime}_{\text{even}}(i,j) = f\left(2i + (i\,\text{mod}\, 2),j\right)\in\mathbb{R}^{\frac{m}{2}\times n}\\
       & f^{\prime}_{\text{odd}}(i,j) = f\left( 2i + (i\,\text{mod}\, 2) + 1), j\right)\in\mathbb{R}^{\frac{m}{2}\times n},
\end{align*}
and we repeat this downsampling procedure for the segmentation masks $u^k$ and $u_B^k$, for $k=0,\dots, N$ as well. 
We denote as \begin{align*}
 \mathcal{J}^k=\{(f_{\text{even}},f_{\text{odd}}, u_{\text{odd}}^k,u_{B\text{,odd}}^k), (f_{\text{odd}},f_{\text{even}},u_{\text{even}}^k,u_{B\text{,even}}^k),  \\(f^{\prime}_{\text{even}},f^{\prime}_{\text{odd}},u_{\text{odd}}^{k^\prime},u_{B\text{,odd}}^{k^\prime}), (f^{\prime}_{\text{odd}},f^{\prime}_{\text{even}},u_{\text{even}}^{k^\prime},u_{B\text{,even}}^{k^\prime})\}
\end{align*}
the training set for $k=0,\dots,N$, with $N$ being the number of iterations of the alternating minimisation.

We then train the two denoising networks, $\Do_F(f)$ and $\Do_B(f)$, restricted to the given regions, $u^k$, and $u^k_B$, i.e. for $(\tilde{f},\tilde{g}, \tilde{u}, \tilde{u}_B)\in\mathcal{J}^k$ we minimise
\begin{align}\label{eq:noise2fast}\mathcal{L}_{\boldsymbol{u}}^k(\boldsymbol{W})=\sum_i \left(\Do_F(\tilde{f})[i]-\tilde{g}[i])\right)^2\cdot \tilde{u}[i] + \left(\Do_B(\tilde{f})[i]-\tilde{g}([i])\right)^2\cdot \tilde{u}_B[i].
\end{align}
Thus the self-supervised denoisers learn to reconstruct even (resp. odd) pixels and $\tilde f$ of the image thanks to the odd (resp. even) ones $\tilde g$. 
As mentioned above, caused by the self-similarity redundancy, by minimising~\eqref{eq:noise2fast}, $\mathcal{L}_u^k$, we also solve problem~\eqref{denoising_min}.

In the next paragraph, we demonstrate the possible applications of three different variants of the proposed joint denoising and segmentation method.
The code is provided on GitHub (\href{https://github.com/Nadja1611/Self2Seg.git}{https://github.com/Nadja1611/Self2Seg.git}).

\section{Experiments and Results}\label{sec:experiments}

As a first application, we test our method on the microscopy cell nuclei dataset from the DSB2018 dataset\footnote{https://www.kaggle.com/c/data-science-bowl-2018} stemming from the Kaggle 2018 Data Science Bowl challenge. 
The data consists of a diverse collection of cell nuclei imaged by various fluorescence microscopes. The patches are of size $128 \times 128$, and come with manually generated segmentation ground truths.
More precisely, we use the noise free data and manually add gaussian noise with three different noise levels, namely 10, 30, and 50. In our experiments, we considered the same subset of images as the one used in~\cite{buchholz2021denoiseg}, where the authors demonstrated that the segmentation of noisy data can be improved by addressing denoising and segmentation in a cooperative (but not fully joint) manner.

In the following experiments, for the evaluation of the segmentation performance we use the Dice metric, and for capturing the denoising performance in the experiments, we choose peak signal to noise ratio (PSNR) and structural similarity metric (SSIM).

We stop our alternating Algorithm~\ref{algo_alternate}, as soon as the decrease of energy~\eqref{eq:joint2} is less than 15 percent of the previous decrease rate. We tried out a few different strategies, and this one turned out to be the most promising one. We indeed observed that a criteria based on the change in the energy decay is robust to different scales for the regularisation parameter $\lambda$, and it also adapts to different type of images.

We compare the segmentation performance of our joint denoising and segmentation approach with the convex Chan-Vese model from~\cite{chan2006algorithms} applied either on the noisy images directly, or on the previously denoised data within a sequential approach. 
For both the proposed joint approach and the sequential one, we use the same denoised image as starting point for fair comparisons.
Further, we test our method against the two-stage method~\cite{cai2013two}, and the partially joint deep learning based denoising and segmentation framework of~\cite{buchholz2021denoiseg}.

\subsection*{Segmentation with the constant background assumption}
We start with the evaluation of our method on a subset of the DSB2018 cell nuclei data which were manually corrupted (noise levels 10, 30 and 50). To this end, we train a foreground denoiser, $\Do_F(f)$, and we assume the background to be constant, i.e. $\Do_B=B\odot\mathbbm{1}.$ For this particular type of images, this assumption is useful, while for images with more structural patterns, this may not be a reasonable choice, and two denoising experts might be necessary. 

To apply our joint approach, we first denoise the given image using the Noise2Fast strategy in the way as described in Section~\ref{sec:numerical}, and use the thresholded denoised image (with the threshold $\epsilon$ set to $0.5$) as initialisation. For noise level 10, we applied the segmentation Algorithm~\ref{algo_alternate} with the constant background assumption, while a Noise2Fast expert was considered for higher noise levels. We recall that the overall process for solving the joint segmentation and denoising is presented in Algorithm~\ref{algo_alternate}. Depending on the type of image, for the alternate process between two and six iterations are required to meet the convergence criteria.

For each method, we conducted the experiments with ten different values of the regularisation parameter $\lambda$ evenly distributed in the interval $[0,1]$, and then selected for each image the result with the highest Dice value.

As a further comparison, we applied the convex Chan-Vese model from~\cite{chan2006algorithms} directly on the noisy images. The obtained results are depicted in Figures~\ref{cell_n10} to~\ref{cell_n50}, while the segmentation evaluation metrics are summarised in Table~\ref{tab_dice}. We observe that for all three noise levels, the sequential and Chan-Vese method from~\cite{chan2006algorithms} struggle with intensity inhomogenities of the cells. These examples highlight the strength of the proposed unified approach, which is capable of segmenting cells with intensities close to the mean value of the background. Notice that the proposed approach does not perform well on the last example due to the presence of intensity inhomogeneities, ascribed to a spatially varying field, the bias field, in the upper left corner of the image. Please note that in this case, evaluating the denoising performance might not be appropriate, as we are assuming a constant background and we do not apply denoising to the background.

Further, we compared to the two-step approach outlined by Cai et al. in \cite{cai2013two}. We employed the provided MATLAB code and fine-tuned the parameter $\lambda$ for three distinct noise levels: $n10$ with $\lambda=4.5$, $n30$ with $\lambda=3.5$, and $n50$ with $\lambda=3.2$. Upon analysis of the outcomes, it became evident that this method encounters difficulties with intensity inhomogeneities and tends to yield excessively smoothed results.

In Table~\ref{tab_dice}, the results obtained by the supervised joint denoising and segmentation method \textsc{DenoiSeg}~\cite{buchholz2021denoiseg}, are summarised. Here, we ran the provided code randomly using 10 training images. More precisely, we used the DSB2018 dataset with noise level zero and added Gaussian noise in the same way as before to all of the 4320 images, among which 3750 were used for training, 670 for validation and the same 50 as we used for our experiments for testing. It has to be mentioned that for validation, all 570 annotated validation images are used in~\cite{buchholz2021denoiseg}, resulting in a total number of 580 annotated images in total during the training process. As displayed in Table~\ref{tab_dice}, this method performs the best. To have a fairer comparison in terms of training data, we decided to adapt their method by using 10 images in total (\textsc{DenoiSeg} 10 in Table~\ref{tab_dice}), 7 for training and 3 for validation. In this setting,  all available data are still used for the training of the denoiser, whereas for the segmentation network, the ground truth masks for all but the ten training images are zeroed out. With this smaller level of supervision, our approach outperforms the method of~\cite{buchholz2021denoiseg}.

\begin{figure}[ht]
\centering
    \includegraphics[width=0.93\columnwidth]{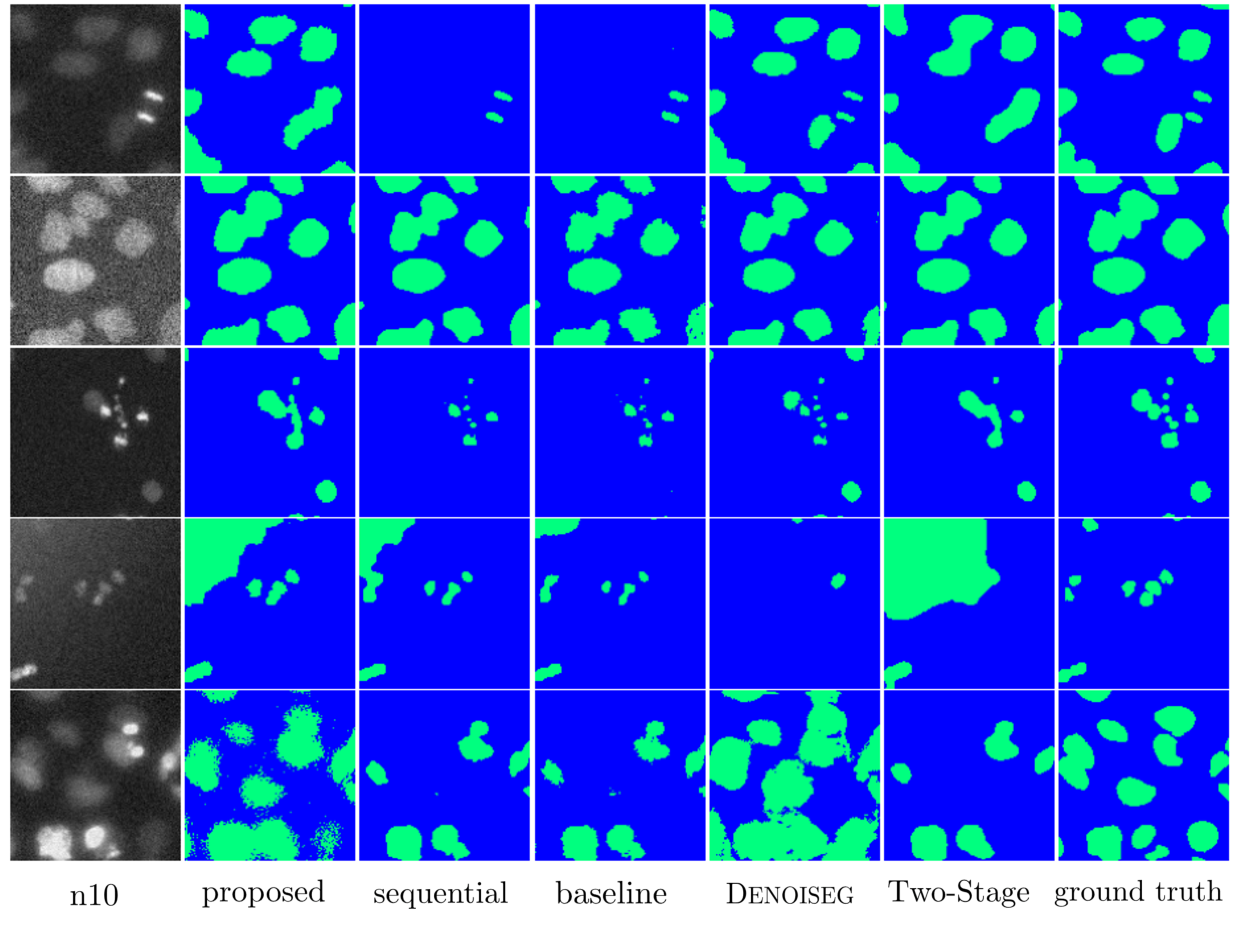}
    \caption{\textbf{Visual comparison of the segmentation results of data with noise level 10.} From left to right, this figure shows: the noisy input, the results obtained with the proposed joint approach, the sequential approach, the chan-Vese baseline and the ground truth segmentation masks. For all compared methods, the $\lambda$  maximising the Dice score has been  selected.}\label{cell_n10}\end{figure}

\begin{figure}[ht]
\centering
    \includegraphics[width=0.93\columnwidth]{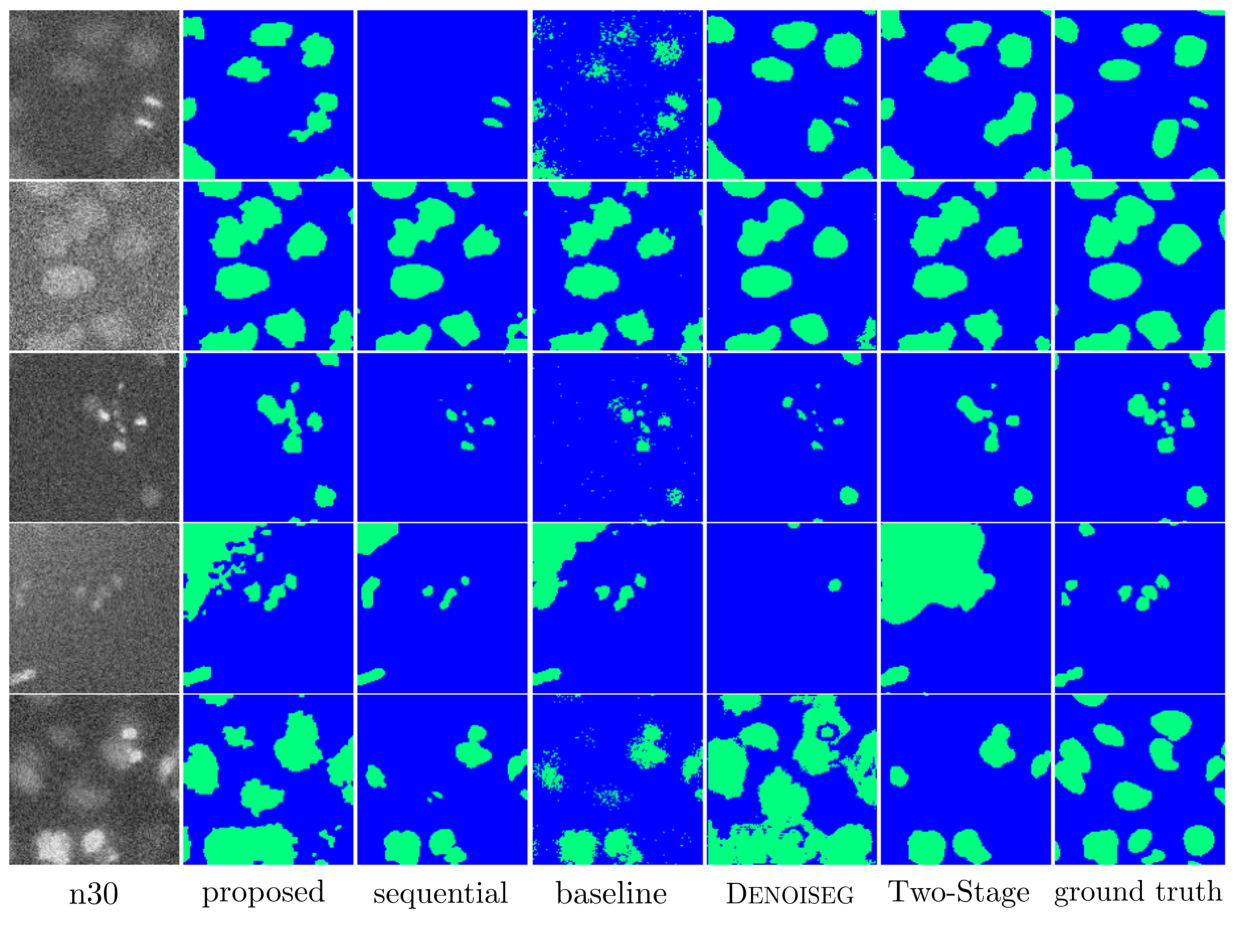}
    \caption{\textbf{Visual comparison of the segmentation results of data with noise level 30.}}\label{cell_n30}\end{figure}
    
For higher noise levels, it is required to filter the background fidelity term. This prevents from considering higher values of the regularisation parameter $\lambda$, that may lead to an over-segmentation of the background and an overall decrease of the segmentation performance. For noise level 30 and 50, as mentioned in Section~\ref{sec:proposed}, we therefore minimise
\begin{equation*} \label{eq:joint_cells}
\begin{split}
\mathcal {E}_{f,\lambda}(u,\boldsymbol{W}) 
= i_{\U}(u) &+ \lambda\lvert u\rvert_{\text{TV}}\dx + \int_\Omega \left[K_{\sigma}\ast\left(f-\Do_F(f)\right)\right]^2 u(x)\dx\\ 
&+\int_{\Omega}\left[K_{\sigma}\ast\left(f-\Do_B(f)\right)\right] ^2(1-u(x))\dx\, 
\end{split}
\end{equation*}
with $K_{\sigma}$ being a mean filter with $\sigma=3$.

The next paragraph shows experimental results which were obtained applying our idea of training denoising experts for both regions. 
\begin{figure}[ht]
\centering
    \includegraphics[width=0.93\columnwidth]{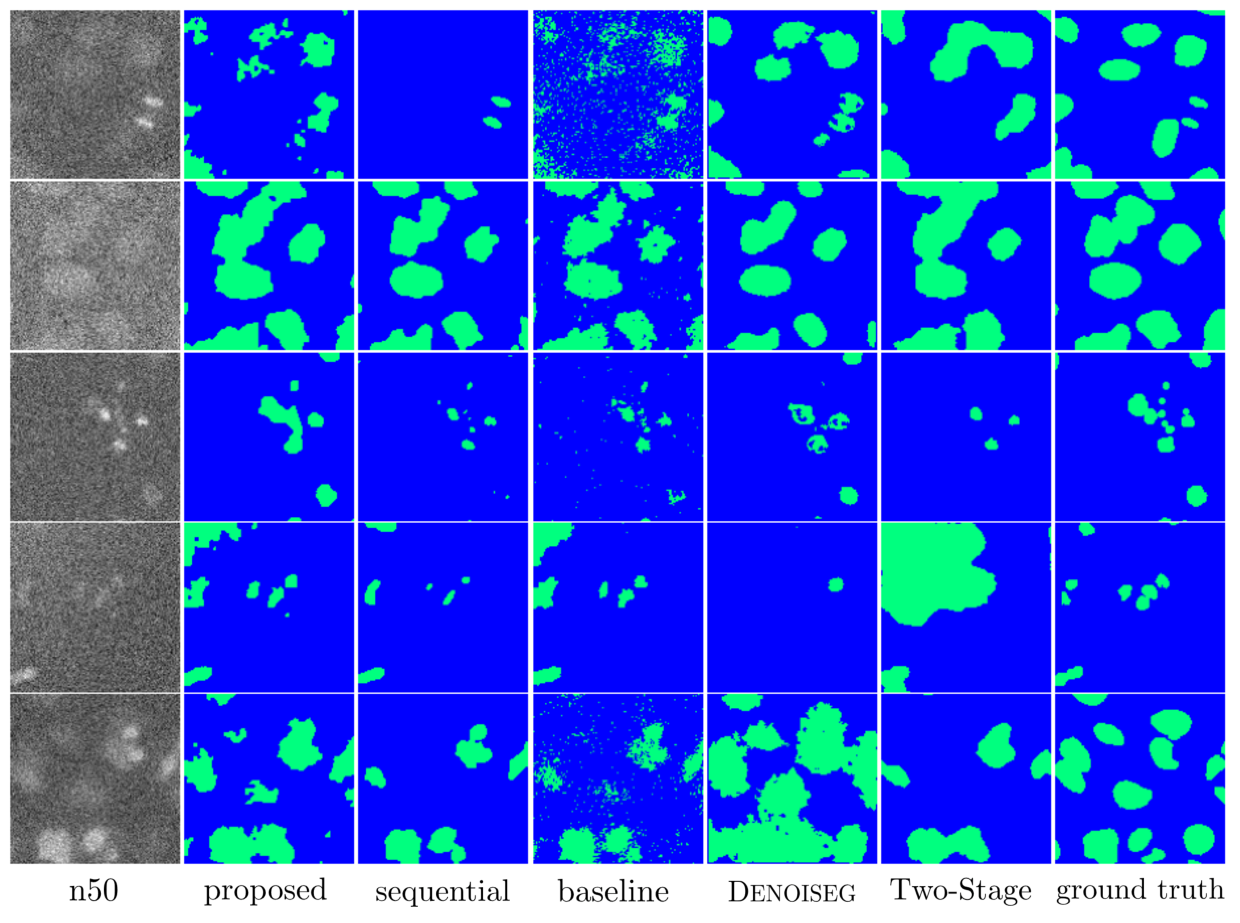}
    \caption{\textbf{Visual comparison of the segmentation results of data with noise level 50.} }\label{cell_n50}\end{figure}
\begin{table}[h]
\footnotesize
\begin{center}
\begin{tabular}{c|c|c|c|c|c|c}
\hline
\textbf{Noise} & \textbf{Baseline} & \textbf{Sequential} & \textbf{Proposed} & \textbf{TwoStage~\cite{cai2013two}} & \textbf{\textsc{DenoiSeg}~\cite{buchholz2021denoiseg}} & \textbf{\textsc{DenoiSeg} 10} \\
\hline
$n10$ & 0.820 & 0.799 & \underline{0.851} & 0.829 & \textbf{0.864} & 0.843 \\
$n30$ & 0.773 & 0.777 & \underline{0.825} & 0.800 & \textbf{0.848} & 0.820 \\
$n50$ & 0.582 & 0.735 & \underline{0.786} & 0.758 & \textbf{0.818} & 0.750 \\
\hline
\end{tabular}
\end{center}
\caption{\label{tab_dice}Dice values obtained on 50 images of the DSB2018 dataset for the compared methods, and three different noise levels. Here, baseline is the convex Chan-Vese~\cite{chan2006algorithms} method directly applied to the noisy data, while for the sequential method, we first denoise the image using Noise2Fast~\cite{lequyer2022fast}.}
\end{table}

\subsection*{Segmentation using two denoisers}
In the toy example in Figure~\ref{fig:toy} from Section~\ref{sec:contributions}, we trained two denoising experts (in this case we used a linear network consisting of one filter of size $15\times 15$) initialised by the yellow and purple boxes of size $30\times 30$. We iterated between the denoising and segmentation three times, until the energy decrease was less then 10 percent. For segmentation, we set the regularisation parameter $\lambda$ to 0.02. After the first segmentation step, the loss functions of the denoisers were restricted to $u$ and $1-u$ respectively. 

Figure~\ref{brodatz} is a typical example showing the strength of the proposed algorithm compared to intensity-based approaches. In this experiment, we preprocessed the given image of size 256$\times$256 in a way, that both regions have the same mean value, and added gaussian noise as described before, with a noise level of 10. As a  consequence, the classical Chan-Vese algorithm totally fails on this example. This model can nevertheless perform well with an adapted hand-crafted transformation of the image to segment. As illustrated in the two last images of Figure~\ref{brodatz}, when fed with the map of normalized image gradient instead of the original image intensities, the Chan-Vese model is able to segment the two part of the image. 

On the other hand, our approach is able to automatically learn a relevant transformation of image data and provides excellent segmentation without any previous trick. The reason for that is again, that the weights  learnt by the two denoising experts strongly depend on the true underlying signal, which, in contrast to the mean intensity, is different in the two regions. Here, both denoising experts were initialised by boxes of size 50$\times$50 centered in the regions. We used a regularisation parameter $\lambda$ of 0.06, and set the learning rate to 0.001. Using the same stopping criterion as in the cell example, these results were obtained after $3$ iterations of the alternating procedure involving denoising and segmentation steps.

\begin{figure}[ht]
    \centering
    \includegraphics[width = 0.93\columnwidth]{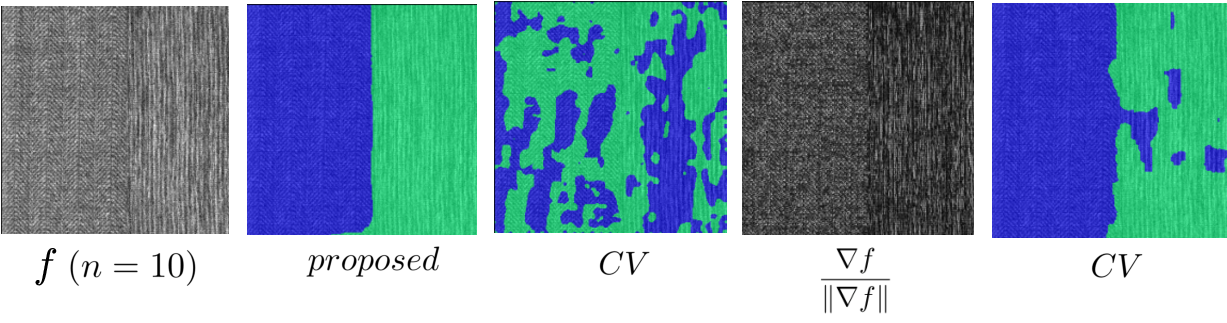}
    \caption{\textbf{Segmentation of a noisy Brodatz image consisting of two different textures.} The first three images show the noisy input $f$, the minimiser of energy~\eqref{eq:joint2}, and the result obtained by directly applying the active contour algorithm~\cite{chan1999active}. The fourth image shows the normalized gradient of $f$, and the last one is the result obtained when applying the classical Chan-Vese algorithm on the normalized gradient map.}
    \label{brodatz}
\end{figure}
In Figure~\ref{denoising_brod}, we display the clean image considered in the experiment of Figure~\ref{brodatz}, as well as different denoised images with their corresponding quantitative metrics. More precisely, the second image in the figure is obtained by applying the Noise2Fast strategy to the whole image, while the third image is the result of the proposed joint optimisation procedure, where the image is composed using the segmentation mask $u$ and the denoised images from the two denoising experts. Especially in the left regions, we can observe a  better denoising performance of the proposed method, which is more evident by examining the PSNR (20.36 vs 19.815) and SSIM (0.753 vs 0.696) values.
\begin{figure}[ht]
    \centering
    \includegraphics[width = 0.93\columnwidth]{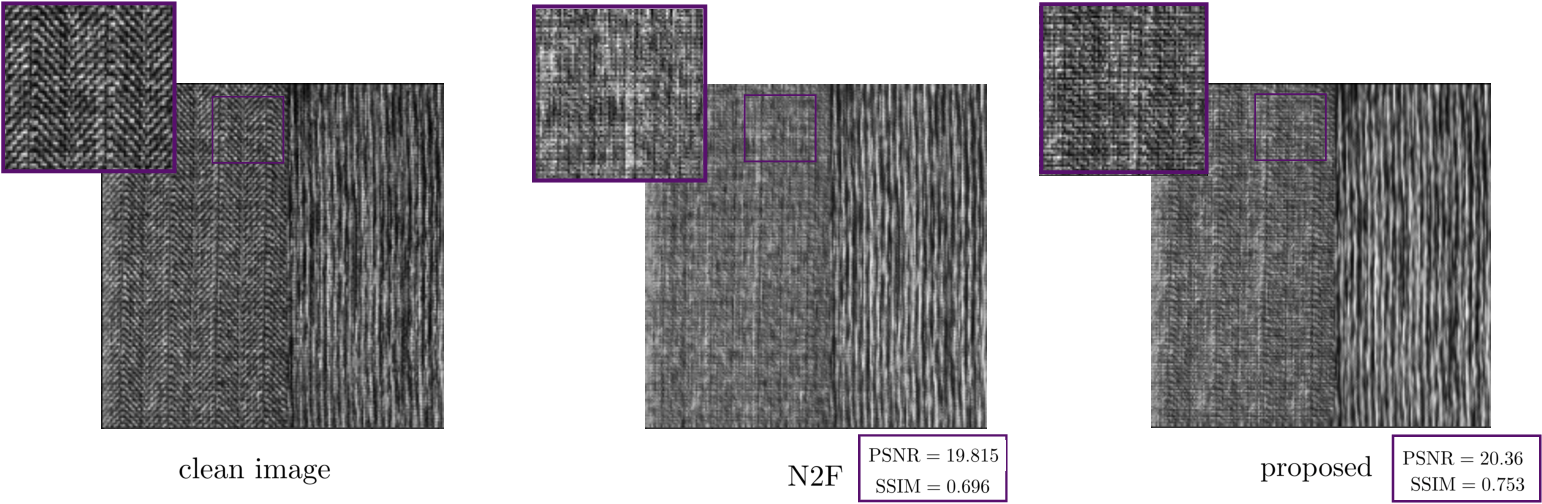}
    \caption{\textbf{Comparison of denoising performance with different Noise2Fast strategies}. On the middle image, Noise2Fast is applied to the whole image. On the right image, we present the final denoised image obtained from the two separate denoisers learned with the proposed framework.}
    \label{denoising_brod}
\end{figure}
\subsection*{Segmentation with a reference mask using Algorithm~\ref{algo_alternate_acc}}
In Figure~\ref{squirrel}, we show another example of image segmentation for three different noise levels using Algorithm~\ref{alg_acc}. The main difficulty of this image lies in the intensities which are shared by the object to be segmented and the background. Therefore, we chose a representative box for initialising the squirrel, which includes both, dark and bright areas, in order to enable the foreground denoising expert to better generalize on the foreground region consisting of dark and bright areas. Naturally, as the squirrel and background do not differ much in terms of their structural properties, the foreground denoiser, $\Do_F(f)$ also performs well on the background, causing the segmentation mask $u$ to grow. In order to control this behaviour, we applied our second strategy that includes a recursive reference mask as described in Algorithm~\ref{algo_alternate_acc}, thus preventing the segmentation mask obtained at iteration $k+1$ from deviating too much from the previous one at iteration $k$. More precisely, the parameters that we used for noise level 10 were $\mu=0.0001,\lambda=0.005$, for noise level 30 we set $\mu=0.005$, $\lambda= 0.005$, while for a noise level  50, we considered $\mu=0.00015$ and $\lambda=0.005$.

\begin{figure}[ht]
    \centering
\includegraphics[width=0.93\columnwidth]{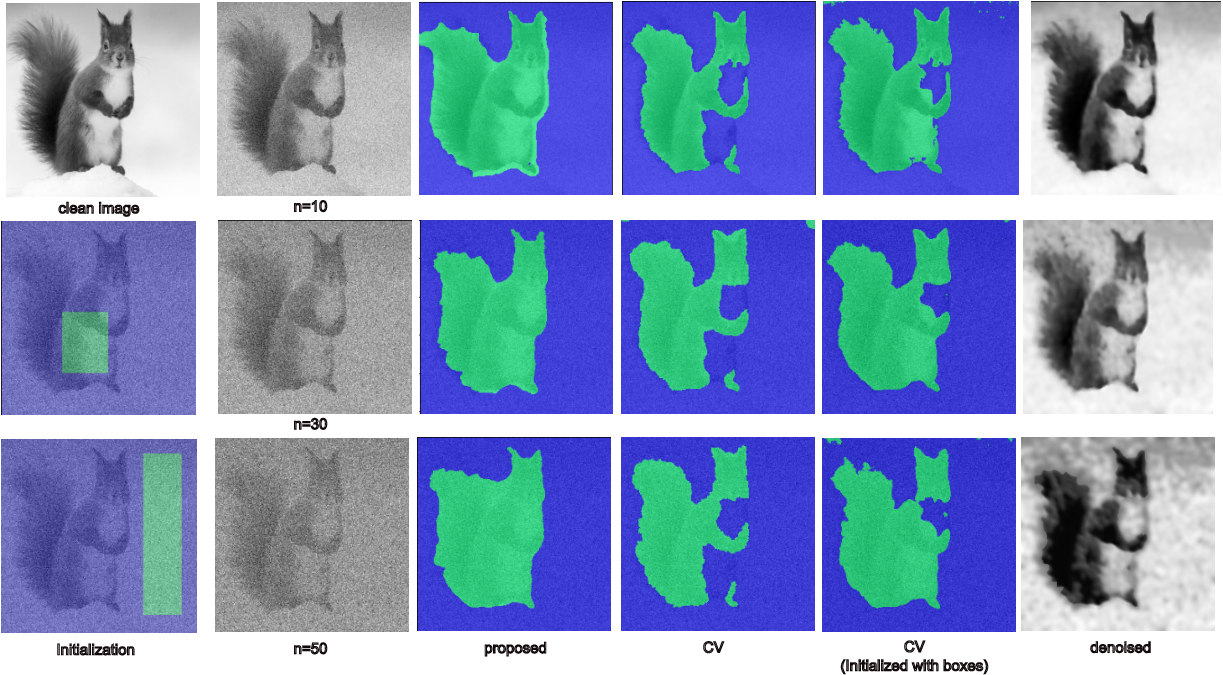}
    \caption{\textbf{Segmentation results obtained on the noisy images showing a squirrel corrupted with three different noise levels.} The first column shows clean input image, and initialisation for foreground and background regions, while in the second column the noisy versions of the given image are depicted. The remaining ones present the segmentation results obtained using the proposed strategy with the segmentation Algorithm~\ref{alg_acc}, the segmentation masks using the Chan-Vese algorithm provided by skimage~\cite{scikit-image} with checkerboard initialisation and box initialisation, respectively. The last column shows the denoised images which are obtained by composing the obtained segmentation mask and expert denoiser outputs. } 
    \label{squirrel}
\end{figure}

In the following we  discuss some possible extensions, and current limitations of the proposed joint denoising and segmentation approach.
\section{Extensions and limitations}
In this scetion, we first show how our unified framework can be extended to the (multichannel) multiclass segmentation case. Then we discuss the current limitations of the proposed joint denoising and segmentation approach.

\subsection{Vector-valued multi-class model}
In order to segment a noise-corrupted vector-valued image represented as $\boldsymbol{f} = (f_1,\dots,f_L)$ into  $C$ different regions, we can consider $C$ dedicated neural networks acting as denoising experts for each region. In this case, the objective is to estimate $C$ segmentation masks $\{u_i\}_{i=1}^C$ satisfying the simplex constraint, i.e. $\sum_{k=1}^C u_i = 1$, as well as the set of network parameters $\boldsymbol{W}^{\text{MC}} = (W^{\text{MC}}_1,\dots,W^{\text{MC}}_C)$. With these notations,  the energy~\eqref{eq:joint2} can  be extended to segment noise-corrupted, vector-valued images $\boldsymbol{f}$ as 
\begin{equation}\label{eq:joint_multiclass}
\begin{aligned}\mathcal{E}_{f,\la}(\boldsymbol{u},\boldsymbol{W}) 
\coloneqq i_{\mathbb{A}} (\boldsymbol{u})& + \lambda \lvert \boldsymbol{u}\rvert_{\text{TV}}+  \sum_ {i=1}^C\sum_{j=1}^L\int_{\Omega} \left(f_j-\Do_{W_i^{\text{MC}}}(f_j)\right)^2 u_i \,.\end{aligned}
\end{equation}
As before, it may not be necessary to train $C$ different denoising networks, as some regions may be assumed to be constant and in this case the ``expert'' for region $i$ can be replaced by the mean value of the image inside region $i$. 

\subsection{Limitations}
One limitation is encountered when training two denoisers, where one might encompass parts of both regions during segmentation, leading to undesired mask convergence. Although we propose a strategy to address this, adding a constraint to prioritize denoising performance in initial regions could be beneficial.

Moreover, the Noise2Fast denoiser may not be suitable for certain images, such as the zebra example, due to its local filter operation. This can result in outcomes similar to the piecewise constant Chan-Vese model. We modified the Noise2Fast strategy and suggest exploring alternative denoisers, such as the deep image prior~\cite{ulyanov2018deep}. Additionally, using contrastive learning ideas, introducing new data fitting terms focusing on structural similarities within classes could enhance segmentation performance.

\section{Conclusion}\label{sec:conclusion}
In this work, we have proposed a novel energy functional for the joint denoising and segmentation of images. Our framework combines the advantages of well-established variational models with modern self-supervised deep learning strategies. A major strength of the method lies in the fact that it can handle single images without the need of ground truth segmentation masks or noisy-clean training pairs. Further, the energy functional is designed in a such a way that both tasks benefit from each other, which has also been confirmed by experiments. 
\bibliographystyle{plain}
\bibliography{main}

\begin{thebibliography}{10}

\bibitem{bae2017convex}
Egil Bae and Ekaterina Merkurjev.
\newblock Convex variational methods on graphs for multiclass segmentation of high-dimensional data and point clouds.
\newblock {\em Journal of Mathematical Imaging and Vision}, 58:468--493, 2017.

\bibitem{baeza2010narrow}
Antonio Baeza, Vicent Caselles, Pau Gargallo, and Nicolas Papadakis.
\newblock A narrow band method for the convex formulation of discrete multilabel problems.
\newblock {\em Multiscale Modeling \& Simulation}, 8(5):2048--2078, 2010.

\bibitem{batson2019noise2self}
Joshua Batson and Loic Royer.
\newblock {N}oise2{S}elf: Blind denoising by self-supervision.
\newblock In {\em International Conference on Machine Learning}, pages 524--533. PMLR, 2019.

\bibitem{buades2011self}
Antoni Buades, Bartomeu Coll, and Jean-Michel Morel.
\newblock Self-similarity-based image denoising.
\newblock {\em Communications of the ACM}, 54(5):109--117, 2011.

\bibitem{buchholz2021denoiseg}
Tim-Oliver Buchholz, Mangal Prakash, Deborah Schmidt, Alexander Krull, and Florian Jug.
\newblock Denoi{S}eg: joint denoising and segmentation.
\newblock In {\em Computer Vision--ECCV 2020 Workshops: Glasgow, UK, August 23--28, 2020, Proceedings, Part I}, pages 324--337. Springer, 2021.

\bibitem{cai2015variational}
Xiaohao Cai.
\newblock Variational image segmentation model coupled with image restoration achievements.
\newblock {\em Pattern Recognition}, 48(6):2029--2042, 2015.

\bibitem{cai2013two}
Xiaohao Cai, Raymond Chan, and Tieyong Zeng.
\newblock A two-stage image segmentation method using a convex variant of the {M}umford--{S}hah model and thresholding.
\newblock {\em SIAM Journal on Imaging Sciences}, 6(1):368--390, 2013.

\bibitem{chambolle2011first}
Antonin Chambolle and Thomas Pock.
\newblock A first-order primal-dual algorithm for convex problems with applications to imaging.
\newblock {\em Journal of mathematical imaging and vision}, 40:120--145, 2011.

\bibitem{chan1999active}
Tony Chan and Luminita Vese.
\newblock An active contour model without edges.
\newblock In {\em Scale-Space Theories in Computer Vision: Second International Conference, Scale-Space’99 Corfu, Greece, September 26--27, 1999 Proceedings 2}, pages 141--151. Springer, 1999.

\bibitem{chan2006algorithms}
Tony~F Chan, Selim Esedoglu, and Mila Nikolova.
\newblock Algorithms for finding global minimizers of image segmentation and denoising models.
\newblock {\em SIAM journal on applied mathematics}, 66(5):1632--1648, 2006.

\bibitem{corona2019enhancing}
Veronica Corona, Martin Benning, Matthias~J Ehrhardt, Lynn~F Gladden, Richard Mair, Andi Reci, Andrew~J Sederman, Stefanie Reichelt, and Carola-Bibiane Sch{\"o}nlieb.
\newblock Enhancing joint reconstruction and segmentation with non-convex {B}regman iteration.
\newblock {\em Inverse Problems}, 35(5):055001, 2019.

\bibitem{csiszar1984information}
Imre Csisz{\'a}r.
\newblock Information geometry and alternating minimization procedures.
\newblock {\em Statistics and Decisions, Dedewicz}, 1:205--237, 1984.

\bibitem{glasner2009super}
Daniel Glasner, Shai Bagon, and Michal Irani.
\newblock Super-resolution from a single image.
\newblock In {\em 2009 IEEE 12th international conference on computer vision}, pages 349--356. IEEE, 2009.

\bibitem{gruber2024lifting}
Nadja Gruber, Johannes Schwab, S{\'e}bastien Court, Elke~R Gizewski, and Markus Haltmeier.
\newblock Lifting-based variational multiclass segmentation algorithm: Design, convergence analysis, and implementation with applications in medical imaging.
\newblock {\em Journal of Computational and Applied Mathematics}, 439:115583, 2024.

\bibitem{kingma2014adam}
Diederik~P Kingma and Jimmy Ba.
\newblock Adam: A method for stochastic optimization.
\newblock {\em arXiv preprint arXiv:1412.6980}, 2014.

\bibitem{kollem2019review}
Sreedhar Kollem, Katta Rama~Linga Reddy, and Duggirala~Srinivasa Rao.
\newblock A review of image denoising and segmentation methods based on medical images.
\newblock {\em International Journal of Machine Learning and Computing}, 9(3):288--295, 2019.

\bibitem{krull2019noise2void}
Alexander Krull, Tim-Oliver Buchholz, and Florian Jug.
\newblock Noise2{V}oid-learning denoising from single noisy images.
\newblock In {\em Proceedings of the IEEE/CVF conference on computer vision and pattern recognition}, pages 2129--2137, 2019.

\bibitem{lehtinen2018noise2noise}
Jaakko Lehtinen, Jacob Munkberg, Jon Hasselgren, Samuli Laine, Tero Karras, Miika Aittala, and Timo Aila.
\newblock Noise2{N}oise: Learning image restoration without clean data.
\newblock {\em arXiv preprint arXiv:1803.04189}, 2018.

\bibitem{lequyer2022fast}
Jason Lequyer, Reuben Philip, Amit Sharma, Wen-Hsin Hsu, and Laurence Pelletier.
\newblock A fast blind zero-shot denoiser.
\newblock {\em Nature Machine Intelligence}, 4(11):953--963, 2022.

\bibitem{li2007implicit}
Chunming Li, Chiu-Yen Kao, John~C Gore, and Zhaohua Ding.
\newblock Implicit active contours driven by local binary fitting energy.
\newblock In {\em 2007 IEEE conference on computer vision and pattern Recognition}, pages 1--7. IEEE, 2007.

\bibitem{li2020three}
Xu~Li, Xiaoping Yang, and Tieyong Zeng.
\newblock A three-stage variational image segmentation framework incorporating intensity inhomogeneity information.
\newblock {\em SIAM Journal on Imaging Sciences}, 13(3):1692--1715, 2020.

\bibitem{litjens2017survey}
Geert Litjens, Thijs Kooi, Babak~Ehteshami Bejnordi, Arnaud Arindra~Adiyoso Setio, Francesco Ciompi, Mohsen Ghafoorian, Jeroen~Awm Van Der~Laak, Bram Van~Ginneken, and Clara~I S{\'a}nchez.
\newblock A survey on deep learning in medical image analysis.
\newblock {\em Medical image analysis}, 42:60--88, 2017.

\bibitem{liu2017image}
Ding Liu, Bihan Wen, Xianming Liu, Zhangyang Wang, and Thomas~S Huang.
\newblock When image denoising meets high-level vision tasks: A deep learning approach.
\newblock {\em arXiv preprint arXiv:1706.04284}, 2017.

\bibitem{liu2014signal}
Xinhao Liu, Masayuki Tanaka, and Masatoshi Okutomi.
\newblock Signal dependent noise removal from a single image.
\newblock In {\em 2014 IEEE International Conference on Image Processing (ICIP)}, pages 2679--2683. IEEE, 2014.

\bibitem{mevenkamp2016variational}
Niklas Mevenkamp and Benjamin Berkels.
\newblock Variational multi-phase segmentation using high-dimensional local features.
\newblock In {\em 2016 IEEE Winter Conference on Applications of Computer Vision (WACV)}, pages 1--9. IEEE, 2016.

\bibitem{mumford1989optimal}
David~Bryant Mumford and Jayant Shah.
\newblock Optimal approximations by piecewise smooth functions and associated variational problems.
\newblock {\em Communications on pure and applied mathematics}, 1989.

\bibitem{niu2017robust}
Sijie Niu, Qiang Chen, Luis De~Sisternes, Zexuan Ji, Zeming Zhou, and Daniel~L Rubin.
\newblock Robust noise region-based active contour model via local similarity factor for image segmentation.
\newblock {\em Pattern Recognition}, 61:104--119, 2017.

\bibitem{prakash2020leveraging}
Mangal Prakash, Tim-Oliver Buchholz, Manan Lalit, Pavel Tomancak, Florian Jug, and Alexander Krull.
\newblock Leveraging self-supervised denoising for image segmentation.
\newblock In {\em 2020 IEEE 17th international symposium on biomedical imaging (ISBI)}, pages 428--432. IEEE, 2020.

\bibitem{ramlau2007mumford}
Ronny Ramlau and Wolfgang Ring.
\newblock A {M}umford-{S}hah level-set approach for the inversion and segmentation of {X}-ray tomography data.
\newblock {\em Journal of Computational Physics}, 221(2):539--557, 2007.

\bibitem{scherzer2009variational}
Otmar Scherzer, Markus Grasmair, Harald Grossauer, Markus Haltmeier, and Frank Lenzen.
\newblock {\em Variational methods in imaging}, volume 167.
\newblock Springer, 2009.

\bibitem{tseng2001convergence}
Paul Tseng.
\newblock Convergence of a block coordinate descent method for nondifferentiable minimization.
\newblock {\em Journal of optimization theory and applications}, 109(3):475, 2001.

\bibitem{ulyanov2018deep}
Dmitry Ulyanov, Andrea Vedaldi, and Victor Lempitsky.
\newblock Deep image prior.
\newblock In {\em Proceedings of the IEEE conference on computer vision and pattern recognition}, pages 9446--9454, 2018.

\bibitem{scikit-image}
{S}t\'efan van~der Walt, {J}ohannes~{L}. {S}ch\"onberger, {J}uan {Nunez-Iglesias}, {F}ran\c{c}ois {B}oulogne, {J}oshua~{D}. {W}arner, {N}eil {Y}ager, {E}mmanuelle {G}ouillart, {T}ony {Y}u, and the scikit-image contributors.
\newblock scikit-image: image processing in {P}ython.
\newblock {\em PeerJ}, 2:e453, 6 2014.

\bibitem{yildizoglu2012active}
Romain Yildizoglu, Jean-Fran{\c{c}}ois Aujol, and Nicolas Papadakis.
\newblock Active contours without level sets.
\newblock In {\em 2012 19th IEEE International Conference on Image Processing}, pages 2549--2552. IEEE, 2012.

\bibitem{zhan2013improved}
Tianming Zhan, Jun Zhang, Liang Xiao, Yunjie Chen, and Zhihui Wei.
\newblock An improved variational level set method for {MR} image segmentation and bias field correction.
\newblock {\em Magnetic Resonance Imaging}, 31(3):439--447, 2013.

\bibitem{zhang2010active}
Kaihua Zhang, Lei Zhang, Huihui Song, and Wengang Zhou.
\newblock Active contours with selective local or global segmentation: a new formulation and level set method.
\newblock {\em Image and Vision computing}, 28(4):668--676, 2010.

\bibitem{zontak2011internal}
Maria Zontak and Michal Irani.
\newblock Internal statistics of a single natural image.
\newblock In {\em CVPR 2011}, pages 977--984. IEEE, 2011.

\end{thebibliography}

\end{document}